\def\eqref#1{equation~\ref{#1}}
\def\1{\bm{1}}
\DeclareMathAlphabet{\mathsfit}{\encodingdefault}{\sfdefault}{m}{sl}
\SetMathAlphabet{\mathsfit}{bold}{\encodingdefault}{\sfdefault}{bx}{n}
\definecolor{textgray}{HTML}{6E6E73}
\patchcmd{\wrong@fontshape}{\@gobbletwo}{}{}{}
\numberwithin{equation}{section}
\definecolor{light}{RGB}{125, 125, 125}
\crefname{tcb@cnt@pbox}{code}{code}
\Crefname{tcb@cnt@pbox}{Code}{Code}
\crefname{assumption}{assumption}{assumption}
\Crefname{assumption}{Assumption}{Assumptions}
\newtcolorbox[auto counter]{pbox}[2][]{
  colback=white,
  title=Code~\thetcbcounter: #2,
  #1,fonttitle=\sffamily,
  fontupper=\sffamily,
  arc=2pt,
  colframe=bgcolor,
  coltitle=fgcolor,
  colbacktitle=bgcolor,
  toptitle=0.25cm,
  bottomtitle=0.125cm
}
\newcommand\applefootnote[1]{%
  \begingroup
  \renewcommand\thefootnote{}%
  \renewcommand\@makefntext[1]{\noindent##1}%
  \footnote{#1}%
  \addtocounter{footnote}{-1}%
  \endgroup
}
\definecolor{cverbbg}{gray}{0.90}
\theoremstyle{plain} 
\newtheorem{theorem}{Theorem} 
\newtheorem{lemma}[theorem]{Lemma}
\theoremstyle{definition} 
\newtheorem{definition}[theorem]{Definition}
\theoremstyle{remark} 
\definecolor{AppleWhite}{RGB}{255,255,255}
\definecolor{ApplePrimaryCoolGray}{RGB}{116,128,139}
\definecolor{AppleCoolGray1}{RGB}{199,209,214}
\definecolor{AppleCoolGray2}{RGB}{147,174,190}
\definecolor{AppleCoolGray3}{RGB}{124,147,160}
\definecolor{AppleCoolGray4}{RGB}{92,102,109}
\definecolor{AppleCoolGray5}{RGB}{78,93,100}
\definecolor{AppleCoolGray6}{RGB}{53,60,65}
\definecolor{AppleBlack}{RGB}{0,0,0}
\definecolor{AppleSecondaryChartGray}{RGB}{168,168,168}
\definecolor{AppleChartGray2}{RGB}{233,233,233}
\definecolor{AppleChartGray3}{RGB}{211,211,211}
\definecolor{AppleChartGray4}{RGB}{190,190,190}
\definecolor{AppleChartGray5}{RGB}{140,140,140}
\definecolor{AppleChartGray6}{RGB}{102,102,102}
\definecolor{AppleChartGray7}{RGB}{64,64,64}
\definecolor{ApplePrimaryChartBlue}{RGB}{84,151,193}
\definecolor{AppleBlue2}{RGB}{212,229,239}
\definecolor{AppleBlue3}{RGB}{169,202,223}
\definecolor{AppleBlue4}{RGB}{127,177,209}
\definecolor{AppleBlue5}{RGB}{71,130,166}
\definecolor{AppleBlue6}{RGB}{55,99,128}
\definecolor{AppleBlue7}{RGB}{45,72,89}
\definecolor{ApplePrimaryChartGreen}{RGB}{83,172,121}
\definecolor{AppleGreen2}{RGB}{212,234,221}
\definecolor{AppleGreen3}{RGB}{169,213,188}
\definecolor{AppleGreen4}{RGB}{126,193,155}
\definecolor{AppleGreen5}{RGB}{58,140,82}
\definecolor{AppleGreen6}{RGB}{39,102,54}
\definecolor{AppleGreen7}{RGB}{29,58,31}
\definecolor{ApplePrimaryChartYellow}{RGB}{253,195,93}
\definecolor{AppleYellow2}{RGB}{254,240,214}
\definecolor{AppleYellow3}{RGB}{254,224,174}
\definecolor{AppleYellow4}{RGB}{254,210,134}
\definecolor{AppleYellow5}{RGB}{230,168,69}
\definecolor{AppleYellow6}{RGB}{191,131,46}
\definecolor{AppleYellow7}{RGB}{153,107,54}
\definecolor{ApplePrimaryChartOrange}{RGB}{250,151,92}
\definecolor{AppleOrange2}{RGB}{254,229,214}
\definecolor{AppleOrange3}{RGB}{252,203,173}
\definecolor{AppleOrange4}{RGB}{252,178,133}
\definecolor{AppleOrange5}{RGB}{227,121,68}
\definecolor{AppleOrange6}{RGB}{191,87,46}
\definecolor{AppleOrange7}{RGB}{143,59,36}
\definecolor{ApplePrimaryChartRed}{RGB}{227,94,105}
\definecolor{AppleRed2}{RGB}{248,215,217}
\definecolor{AppleRed3}{RGB}{241,174,180}
\definecolor{AppleRed4}{RGB}{234,135,143}
\definecolor{AppleRed5}{RGB}{196,63,77}
\definecolor{AppleRed6}{RGB}{153,35,53}
\definecolor{AppleRed7}{RGB}{102,19,43}
\definecolor{ApplePrimaryChartPurple}{RGB}{161,150,204}
\definecolor{ApplePurple2}{RGB}{231,228,242}
\definecolor{ApplePurple3}{RGB}{208,202,229}
\definecolor{ApplePurple4}{RGB}{185,176,217}
\definecolor{ApplePurple5}{RGB}{128,113,171}
\definecolor{ApplePurple6}{RGB}{89,76,128}
\definecolor{ApplePurple7}{RGB}{62,46,101}
\definecolor{AppleCoolGray}{RGB}{116,128,139}
\definecolor{AppleChartGray}{RGB}{168,168,168}
\definecolor{AppleBlue}{RGB}{84,151,193}
\definecolor{AppleGreen}{RGB}{83,172,121}
\definecolor{AppleYellow}{RGB}{253,195,93}
\definecolor{AppleOrange}{RGB}{250,151,92}
\definecolor{AppleRed}{RGB}{227,94,105}
\definecolor{ApplePurple}{RGB}{161,150,204}
\newcounter{litmp}
\newcommand{\li}[1]{%
  \setcounter{litmp}{#1}%
  \textbf{(\roman{litmp})}\xspace
}
\definecolor{codebg}{rgb}{0.95,0.95,0.95}
\definecolor{codebg}{rgb}{0.95,0.95,0.95}
\definecolor{codegreen}{rgb}{0,0.6,0}
\definecolor{codegray}{rgb}{0.5,0.5,0.5}
\definecolor{codepurple}{rgb}{0.58,0,0.82}
\definecolor{backcolour}{rgb}{0.95,0.95,0.95}
\lstdefinestyle{pythonstyle}{
    language=Python,
    backgroundcolor=\color{backcolour},   
    commentstyle=\color{codegreen},
    keywordstyle=\color{blue}\bfseries,
    numberstyle=\tiny\color{codegray},
    stringstyle=\color{codepurple},
    basicstyle=\ttfamily\scriptsize,
    breakatwhitespace=false,         
    breaklines=true,                 
    captionpos=b,                    
    keepspaces=true,                 
    showspaces=false,                
    showstringspaces=false,
    showtabs=false,                  
    tabsize=4,
    frame=lines,
    framesep=2mm,
    rulecolor=\color{gray!80},
    morekeywords={self,True,False,None,as,with,yield,lambda},
    morecomment=[s]{"""}{"""},
    morestring=[b]',
    morestring=[b]"
}
\newcommand{\green}[1]{\textcolor{AppleGreen6}{#1}}   
\newcommand{\pink}[1]{\textcolor{AppleRed5}{#1}}
\newcommand{\blue}[1]{\textcolor{AppleBlue5}{#1}}
\newcommand{\purple}[1]{\textcolor{ApplePurple5}{#1}}
\newcommand{\orange}[1]{\textcolor{AppleOrange5}{#1}}
\newcommand{\gemma}{Gemma3-4B\xspace}
\newcommand{\qwen}{Qwen3-4B\xspace}
\newcommand{\smollm}{SmolLM3-3B\xspace}
\newcommand{\fluxs}{FLUX-s\xspace}
\newcommand{\sdxl}{SDXL\xspace}
\newcommand{\ie}{\textit{i.e.,}\xspace}
\newcommand{\eg}{\textit{e.g.,}\xspace}
\crefname{appendix}{App.}{App.}
\Crefname{appendix}{Appendix}{Appendices}
\crefname{section}{Sec.}{Sec.}
\Crefname{section}{Section}{Sections}
\crefname{table}{Tab.}{Tab.}
\Crefname{table}{Table}{Tables}
\crefname{figure}{Fig.}{Fig.}
\Crefname{figure}{Figure}{Figures}
\crefname{algorithm}{Alg.}{Alg.}
\Crefname{algorithm}{Algorithm}{Algorithms}
\crefname{algocf}{Alg.}{Alg.}
\Crefname{algocf}{Algorithm}{Algorithms}
\crefname{theorem}{Thm.}{Thm.}
\Crefname{theorem}{Theorem}{Theorems}
\crefname{lemma}{Lem.}{Lem.}
\Crefname{lemma}{Lemma}{Lemmas}
\crefname{definition}{Def.}{Def.}
\Crefname{definition}{Definition}{Definitions}
\crefname{remark}{Rmk.}{Rmk.}
\Crefname{remark}{Remark}{Remarks}
\crefname{tcb@cnt@mytheorem}{Thm.}{Thm.}
\Crefname{tcb@cnt@mytheorem}{Theorem}{Theorems}
\crefname{tcb@cnt@myexample}{Ex.}{Ex.}
\Crefname{tcb@cnt@myexample}{Example}{Examples}
\crefname{tcb@cnt@mycorollary}{Cor.}{Cor.}
\Crefname{tcb@cnt@mycorollary}{Corollary}{Corollaries}
\Crefname{ALC@unique}{Line}{Lines}
\newcounter{myalg}
\newenvironment{algorithmgroup}[1][t]{%
  \@float{algocf}[#1]
}{%
  \end@float%
}
\title{GenCtrl -- A Formal Controllability Toolkit for Generative Models}
\author[*,1,2]{Emily Cheng}
\author[3]{Carmen Amo Alonso}
\author[1]{Federico Danieli}
\author[1]{Arno Blaas}
\author[1]{Luca Zappella}
\author[1]{\\Pau Rodríguez}
\author[1]{Xavier Suau}
\affiliation{${}^1$Apple, ${}^2$Universitat Pompeu Fabra, ${}^3$Stanford}
\abstract{

As generative models become ubiquitous, there is a critical need for fine-grained control over the generation process.
Yet, while controlled generation methods from prompting to fine-tuning proliferate, a fundamental question remains unanswered: are these models truly controllable in the first place? In this work, we provide a theoretical framework to formally answer this question. 
Framing human-model interaction as a control process, we propose a novel algorithm to estimate the controllable sets of models in a dialogue setting. Notably, we provide formal guarantees on the estimation error as a function of sample complexity: we derive probably-approximately correct bounds for controllable set estimates that are distribution-free, employ no assumptions except for output boundedness, and work for any nonlinear control system (\ie any generative model). 
We empirically demonstrate the theoretical framework on different tasks in controlling dialogue processes, for both language models and text-to-image generation.
Our results show that model controllability is surprisingly fragile and highly dependent on the experimental setting. 
This highlights the need for rigorous controllability analysis, shifting the focus from simply attempting control to first understanding its fundamental limits.

}
\date{\sffamily\today}
\begin{document}

\maketitle

\section{Introduction}

The widespread deployment of generative models has driven significant research effort into controlling their outputs \citep{ctg-survey}. A diverse array of controlled generation methods has been developed, from prompting~\citep{marvin2023prompt} and finetuning~\citep{zero-shot, instructgpt} to steering towards specific styles or concepts~\citep{li2024inference, rodriguez2025lineas, rimsky-etal-2024-steering, wu2024reft}. 
However, beneath the surface of this empirical progress lies a set of foundational, yet largely unexamined, assumptions.
Specifically, current methods implicitly assume the model is fundamentally controllable. This rests on three key premises. First, they assume \textbf{reachability}: that a desired set of outputs is achievable using a given control mechanism and initial prompt. Second, they assume universal \textbf{controllability}, meaning the desired output is reachable from any initial state. Finally, they assume \textbf{calibration}, that is, the output is a direct function of the control variable.
The absence of tools to formally verify these assumptions casts uncertainty over the reliability and fundamental limits of current controlled generation techniques.

In this work, we bring these assumptions to the forefront. We provide tools to verify them by turning to control theory \citep{sontag_control}, adapting its methods to formalize the problem for modern AI systems. Casting controlled generation as a \emph{control process}, we propose a framework to quantify its reachability and controllability with probabilistic guarantees. Our framework treats a generative model as an opaque nonlinear control system; it is not only agnostic to model architecture but is also designed to handle discrete or continuous-valued input and output spaces (\eg textual prompts and generations). Finally, we demonstrate our framework on a user-model dialogue setting, a common use-case of current generative models~\citep{gemma, yang2025qwen3technicalreport, openai2023gpt4}.

Our primary contributions are as follows:

\begin{itemize}[itemsep=0.01ex, topsep=0.01ex, leftmargin=.5cm]

\item \textbf{A formal framework for controllability.} We introduce a control-theoretic framework to rigorously define and quantify the reachable and controllable sets for any opaque system, including generative models. To the best of our knowledge, this provides the first formal language to characterize the operational boundaries of generative model control.

\item \textbf{PAC algorithms for controllable set estimation.} We present Monte Carlo algorithms for reachable and controllable set estimation from opaque system interactions, along with probably-approximately correct (PAC) confidence bounds (\cref{mythm:abstract,mythm:controllability}). The algorithms are particularly well-suited for generative models as they are distribution-free, only assume boundedness of the target attribute, and explicitly account for the discrete bottlenecks inherent in many generative systems.

\item \textbf{Empirical findings.} We conduct a broad empirical analysis of controllability across models and tasks for text generation with LLMs, and image generation with text-to-image models (T2IMs). Leveraging our theoretical results (\cref{mythm:abstract,mythm:controllability}), our experiments reveal that model controllability is not guaranteed. The significant heterogeneity of results across modalities, models, and tasks signals the need for case-specific controllability analyses.

\item \textbf{An open-source toolkit.} We release our framework and algorithms in \url{https://github.com/apple/ml-genctrl}, a PyTorch library at to facilitate analysis of controllability by the broader research community.
\end{itemize}

This work argues for a paradigm shift where the controllability of generative models is not an implicit assumption but an explicit subject of analysis. By providing formal tools to do so, we lay a more principled foundation for future work in controllable AI.

\section{Related Work}
\textbf{Controlling generative models}\quad  Research on controlling generative model outputs follows three main paradigms: \li{1} Prompt engineering, \eg in-context learning or chain-of-thought prompting \citep{Brown2020gpt3,wei2022chain}; \li{2} Finetuning, \eg via RLHF or DPO \citep{Ouyang2022instructgpt,rafailov2023direct}, to align the model behavior; \li{3} Representation engineering, which directly manipulates model activations~\citep{turner2023activation,li2024inference,suau2024whispering,rodriguez2024controllinglanguagediffusionmodels,wu2024reft,cheng2024linearly}.
Our framework can assess any control mechanism including all examples above. In this work we mainly focus on prompting, given its widespread use.

\textbf{Control theory and reachability analysis for generative models}\quad  Controllability and reachability are core concepts in control and dynamical systems theory \citep{sontag_control,borrelli2017predictive}. Despite their widespread use in other areas of engineering, such as robotics \citep{nakamura2025generalizing}, the application of control theoretic concepts to machine learning has mainly been limited to reinforcement learning \citep{recht2019tour}, or to analyze training dynamics \citep{han2019mean}. Despite recent efforts to develop theory for generative models \citep{soatto2023taming,marchi2024heat}, these often rely on impractical assumptions. Extending control-theoretic concepts to large generative models poses several challenges, mostly due to their nonlinear and high-dimensional nature \citep{bansal2017hamilton}. In the control literature, past works have addressed model-free reachable set estimation by proposing data-driven methods. However, existing methods do not apply to generative model outputs for three reasons: either they \li{1} apply to \emph{state} and not output reachability \citep{devonport_symbolic,pmlr-v120-devonport20a,pmlr-v242-dietrich24a}; \li{2} impose assumptions, \eg Lipschitzness, on system dynamics that are not verifiable for opaque LLMs and T2IMs \citep{choi2025datadrivenhamiltoniandirectconstruction,data-driven-reachability,pac_model_checking}; \li{3} return continuous reachable set estimates while LLMs and T2IMs' reachable sets are countable, making the estimates vacuously large \citep{devonport_data-driven_2019,SAVER,pmlr-v120-devonport20a,pmlr-v242-dietrich24a}. We bridge these gaps by developing a sampling algorithm that estimates non-vacuous reachable and controllable sets for generative models, with probabilistic guarantees. This paves the way for rigorous control-theoretic analysis viable even for modern, opaque large-scale models.

\section{Problem Formulation}
\label{sec:setup}
We study control of LLMs or T2IMs in a dialogue setting. In \cref{sec:dialogue_process} we formalize the dialogue process through a control-theoretic lens \citep{sontag_control} and in \cref{sec:reach_contr_defs} we define reachability and controllability under this dialogue formalization.

\subsection{Dialogue with a Generative Model as a Control Process} 
\label{sec:dialogue_process}
We study a \emph{dialogue process} (DP) with generative models, \ie LLMs and T2IMs (\cref{fig:problem_setup}). Next, we describe the problem setup, which we formalize in the language of dynamical systems theory.

\textbf{\green{Time Domain}}\quad  The time domain $\mathcal T$ is the set that indexes the system's temporal evolution. For a DP, $\mathcal T = \mathbb N$ is discrete, where each $t\in\mathbb N$ corresponds to the $t^{th}$ turn in the dialogue process. 

\textbf{\pink{State Space}}\quad  The state space $\mathcal X$ is the set of possible states the system can occupy. Each $x \in \mathcal X$ describes the system at a given time; for $t=0$, $\mathcal X$ is the space of initial string contexts, and for $t\geq 1$, $\mathcal X$ is the space of possible generations by the model. In a DP with an LLM or T2IM, $\mathcal X$ is the set of strings and of strings and images, respectively.

\textbf{\blue{Control Input Space}} The control input space $\mathcal U$ is the set of all possible inputs that can intervene the system. Each control input $u\in\mathcal U$ denotes an intervention to the system at a given time. In practice, $u$ is any degree of freedom tuned by the user or system designer to affect generation, \eg a prompt, or activation steering \citep{rodriguez2025lineas, li2024inference}. While $\mathcal U$ can be any type of intervention, we \emph{focus on prompting as the primary way users interact with generative models}.

\textbf{Dynamics Map}\quad 
We frame dialogue as a control process where user inputs provide feedback to guide the model's behavior. In particular, we treat these inputs as interventions to the generative model. An \emph{intervened model} $\phi$ maps prompts and any other control variables to the next model generation, defining dynamics $\phi: \mathcal{X} \times \mathcal{U} \rightarrow \mathcal{X}$, where $x_{t+1} = \phi(x_t,\dots,x_0;u_t,\dots,u_0)$. \footnote{In practice, LLMs come with a \emph{sampling strategy} for extracting $g$ from $\phi$, \eg greedy decoding or top-$p$ sampling. Our definition of $\phi$ abstracts away the sampling strategy, which is already reflected in $x_{t+1}$.}

Controlled generation aims to achieve a desired behavior as quantified by a metric (\eg a sentiment score). This is formalized via a measurement value and readout map, defined below. 

\textbf{\orange{Measurement Value}}\quad 
The measurement-value space $\mathcal{Y}$ is the set of possible measurable outputs of the system. In particular, the measurement value is how we evaluate the system's behavior at a given time. This may correspond to external assessments of the model's generation, \eg sentence length or number of cars in an image. The choice of $\mathcal{Y}$ is task-dependent. Controlled generation, through input interventions, aims to restrict the system $\phi$ measurements to a desired subset $\mathcal Y'  \subset \mathcal Y$.

\textbf{\purple{Readout Map}}\quad 
The readout map $h : \mathcal{T} \times \mathcal{X} \rightarrow \mathcal{Y}$ maps the state at time $t$ to a measurement value. For DPs, $h(t, x_t) =: h(x_t)$ is given by a deterministic function that maps model generations $x_t$ to measurement values in $\mathcal Y$. For example, if the attribute being controlled is the LLM's output string length, the readout $h$ may be the Python $\texttt{len()}$ function.

\begin{figure}[tb]
    \centering    
    \begin{overpic}[width=0.95\textwidth]{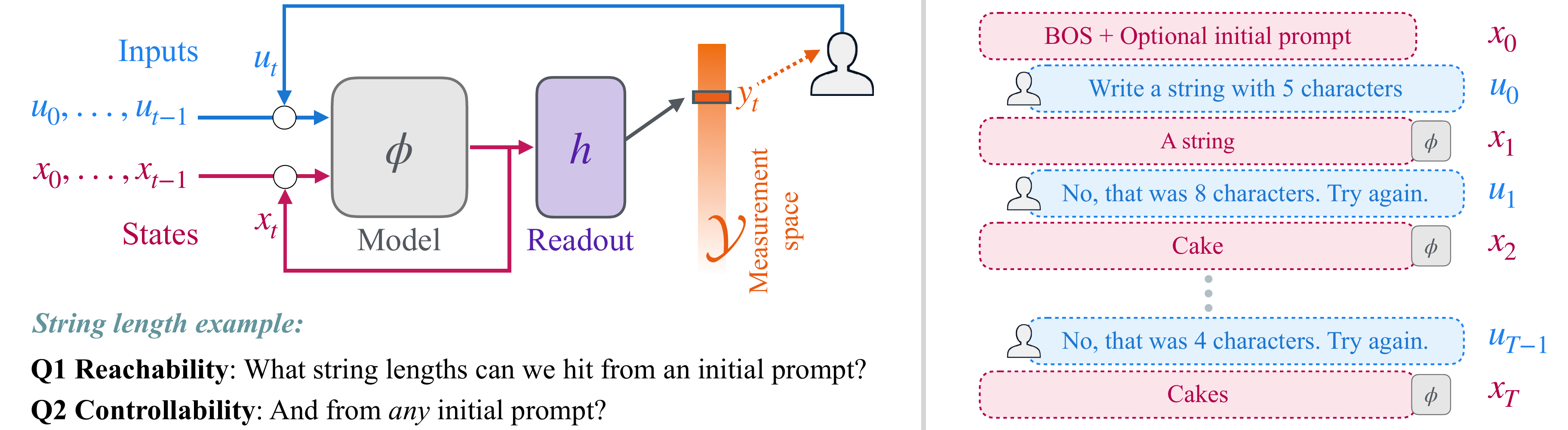}
    \end{overpic}
    \vspace{-2mm}
    \caption{\footnotesize \textbf{Dialogue Process.} (Left) Schema of a dialogue process as a control process, showing the roles of each of the concepts introduced in \cref{sec:dialogue_process}. \Cref{mythm:abstract,mythm:controllability} tell how many inputs and initial states (respectively) should be sampled to answer \textbf{Q1} and \textbf{Q2} with confidence $\delta$.    (Right) Example of dialogue process.}
    \vspace{-2mm}
    \label{fig:problem_setup}
\end{figure}

The above concepts permit a rigorous definition of a dialogue process between a user and intervened model $\phi$ as a discrete-time control system (\cref{def:dp}). As shown in \cref{fig:problem_setup}, the user and model take turns in a conversation starting from the user's initial prompt. Subsequent user prompts are interventions that trigger another dialogue turn after the model's response.

\begin{definition}\label{def:dp}
A \emph{dialogue process} is a stochastic discrete-time, nonlinear control system 
parametrized by the tuple 
$(\phi, \green{\mathcal T}, \pink{\mathcal X}, \blue{\mathcal U}, \purple{h}, \orange{\mathcal Y})$, 
such that:
\begin{subequations}\label{eqn:DP}
    \begin{align}
        \pink{x_{\green{0}}} &= \text{initial prompt} \in \pink{\mathcal X}, \\
        \pink{x_{\green{t}}} &= 
          \phi(\pink{x_{\green{t-1}}, \dots, x_{\green{0}}};
               \blue{u_{\green{t-1}}, \dots, u_{\green{0}}})
          \in \pink{\mathcal X}, \\
        \orange{y_{\green{t}}} &= 
          \purple{h}(\pink{x_{\green{t}}}) 
          \in \orange{\mathcal Y},
    \end{align}
\end{subequations}
for $\green{t \in \mathbb N}$, where $\blue{u_{\green{0\cdots t-1}}}$ are a collection of inputs from $\blue{\mathcal U}$.
\end{definition}

\subsection{Reachability \& Controllability in Dialogue Processes}
\label{sec:reach_contr_defs}

\Cref{def:dp} establishes a dialogue process as a control process. 
In this work, we do not study the question of how to choose an input from $\mathcal U$ to elicit some result (controller design). Instead, we tackle the more fundamental question of \emph{what is possible} to achieve over a time horizon $T$ \li{1} given a specific initial state, \ie a prompt $x_0$; and \li{2} for any or all initial states. These two are well-studied in control theory: \li{1} pertains to the question of \emph{reachability}, while \li{2} addresses \emph{controllability}. In what follows, we formalize these definitions in the context of generative models.

\begin{definition}\label{def:reachability}
Given a dialogue process~(\cref{def:dp}) with an initial state $x_0\in\mathcal X$, then the \emph{reachable set} of outputs for the model with dynamics $\phi$ is given by
    \begin{equation}\label{eqn:reachibility}
        \mathcal R(\pink{x_0}, \blue{\mathcal U}, \green{t}) = \{\tilde y \in \orange{\mathcal Y} \mid \exists \ \blue{u_{\green{0}},\dots,u_{\green{t-1}}}\in \blue{\mathcal U} \text{ and }  \orange{y}_{\green{t}} = \tilde y\}.
    \end{equation}
\end{definition}

That is, given an initial state $x_0\in\mathcal X$, the reachable output set at time $t$ is the set of all $y \in \mathcal Y$ reached when integrating the dynamics $t$ steps, with some control sequence $u_0,\dots,u_{t-1}\in\mathcal U$. \cref{def:reachability} relies on a fixed initial state $x_0$. It is then natural to ask what is reachable from all initial states. Namely, if all $x_0 \in \mathcal X$ can reach all values in the set $\mathcal Y' \subseteq \mathcal Y$ using some input sequence in $\mathcal U$, then the system is \emph{controllable} on $\mathcal Y'$ \citep{sontag_control}. Formally, 

\begin{definition}
\label{def:controllability}
	A system is \emph{controllable} on $\mathcal Y' \subseteq \orange{\mathcal Y}$ if $\exists \green{t} \geq 0$ such that $\mathcal R(\pink{x_0}, \blue{\mathcal U}, \green{t}) = \mathcal Y'$ $\forall \ \pink{x_0} \in \pink{\mathcal X}$.
\end{definition}

A differentiating feature of LLMs and T2IMs is the discrete nature of string prompts, which makes the reachable set countable (see \cref{app:discrete_bottleneck} for discussion and formal proofs). We refer to this as the \textbf{discrete bottleneck}. This property affects both reachability (\cref{def:reachability}) and controllability (\cref{def:controllability}), which are preconditions for system \textit{control}. 


In \cref{sec:coarse_grained} we propose \emph{coarse-grained} reachability as a relaxation of the problem for DPs with a discrete bottleneck. In \cref{sec:reachability} we derive a PAC bound to approximate the reachable set of a given initial state, then in \cref{sec:controllability} present a PAC bound to estimate the controllable set of the system. Finally, in \cref{sec:experiments} we use our theory to estimate reachable and controllable sets of current LLMs and T2IMs, finding they are seldom controllable on simple tasks.



\section{Monte Carlo Reachability for Dialogue Processes}
\label{sec:coarse_grained}
Generative models are nonlinear, high-dimensional, and have unknown dynamics. Therefore, it is nontrivial to analytically derive the reachable set. Hence, we rely on \emph{statistical guarantees} that only depend on the system's empirical trajectories. In \cref{sec:reachability} we tailor the definition of reachability to the discrete bottleneck of LLMs and T2IMs. Then, we propose a sample complexity bound for the reachable set in \cref{sec:reachable_bound}, which we use in \cref{sec:reachability_alg} in an algorithm that computes an error-bounded reachable set approximation. Since reachable set estimates will rely on sampling, we first set up a \emph{probabilistic notion} of the control process, as in prior work \citep{devonport_data-driven_2019}:
\begin{itemize}[noitemsep,leftmargin=.5cm]
    \item $\pink {X_0} \sim \pink{p_0}$, where $\pink{p_0}: \pink{\mathcal X_0} \to [0,1]$ is a probability density over the set of initial states $\pink{\mathcal X_0\subset\mathcal X}$.
    \item $\blue{U_{\green{t}}} \sim \blue{p_{u, t}}$, where $\blue{p_{u,t}}: \blue{\mathcal U} \to [0,1]$ is a probability density over the input space $\blue{\mathcal U}$ at time $\green{t} \geq 0$. 
    \item $\pink{X}_{\green{t}} = \phi(\green{t}; \pink{X_0}, \blue{U}_{\green{0\cdots t-1}}) \sim p_t$, where $p_t: \mathcal X_t \to [0,1]$ is the pushforward of $\pink{p_0}$ and $\blue{p_u}$ under system dynamics, for $\green{t} \geq 0$. 
    \item $\orange{Y}_{\green{t}}=\purple{h}(\pink{X}_{\green{t}}, \blue{U}_{\green{t}}) \sim \orange{ p_{y,\green{t}}}$, where $\orange{p_{y,\green{t}}}: \orange{\mathcal Y} \to [0,1]$ is the probability density over measurements at time $\green{t} \geq 0$. 
\end{itemize}


\subsection{Coarse-Grained Reachability Overcomes the Discrete Bottleneck}
\label{sec:reachability}
The discrete bottleneck in DPs impacts the reachable set: even if the measurement value is \emph{continuous-valued} (\eg continuous text formality score), the true reachable set can be countable. This means we cannot directly apply existing reachable set estimators that return \emph{continuous sets} \citep{devonport_data-driven_2019,pmlr-v144-alanwar21a}, as the returned sets would be vacuously large.


A workaround is coarse-grained reachability, a simplified but faithful discrete abstraction of the original measurements \citep{ren_symbolic_2020}. We consider two cases: \li{1} categorical measurements, \eg $\{\text{toxic}, \text{non-toxic}\}$, which require no abstraction (they are already discrete); \li{2} continuous-valued measurements, \eg a formality score in $[0,1]$. For the latter, we relax the reachability problem to a $\gamma$-quantized one: instead of exactly reaching values in $\mathcal Y$, we allow an error margin $\gamma$ (\cref{def:quantized_reachable_set}). For instance, rather than requiring $0.\overline{3}$ exactly, we aim for $0.\overline{3}\pm 0.05$. 
\begin{definition}[$\gamma$-quantized Reachable Set]
\label{def:quantized_reachable_set}
    Let $(\phi, \green{\mathcal T}, \pink{\mathcal X}, \blue{\mathcal U}, \purple{h},  \orange{\mathcal Y})$ be a dialogue process per \cref{def:dp}. 
    The \emph{$\gamma$-quantized reachable output set} at time $t$ from $x_0$ is given by
    \begin{equation}\label{eqn:quantized-reachability}
        \mathcal R^\gamma(\pink{x_0}, \blue{\mathcal U}, \green{t}) = \{\tilde y\in\orange{\mathcal Y} \mid \exists \ \blue{u_{\green{0}},\dots,u_{\green{t-1}}}\in \blue{\mathcal U} \text{ and }  \|\orange{y}_{\green{t}} - \tilde y\|_{\infty} \leq \gamma \},
    \end{equation}
    where $\|\cdot\|_\infty$ refers to the $\infty$-norm and $\gamma \in \mathbb R_+$.\footnote{In practice, $\mathcal Y \in \mathbb R^n$ can consist of several unrelated attributes $1\ldots n$ like \emph{formality} and \emph{sentence length}. The $\infty$-norm makes the theory already hold for multiple orthogonal readout dimensions.
} 
\end{definition}

For continuous-valued measurements, our goal is now to estimate $\mathcal R^\gamma$. Recovering $\mathcal R^\gamma$ \emph{guarantees} each point in $\mathcal R^\gamma$ is at most $\gamma$ away from some point in the true reachable set. For brevity, we refer to reachable set estimation over both categorical or quantized $\mathcal Y$ as coarse-grained reachability.

\subsection{Sample complexity bound for coarse-grained reachable set estimation}
\label{sec:reachable_bound}
Our goal is a PAC bound that estimates $\mathcal R^\gamma$. To build up to this bound, we first find a suitable quantization $\mathcal Y_q$ of the measurement-value space $\mathcal Y$, requiring that $\mathcal Y_q$ has finite cardinality, \ie that $\mathcal Y$ is bounded. If $\mathcal Y$ is already categorical, then $\mathcal Y_q=\mathcal Y$. If $\mathcal Y$ is continuous-valued, $\mathcal Y_q$ is a minimal cover of $\mathcal Y$ using $\infty$-balls of radius $\gamma /2$, where $\gamma$ is the user-defined error in \cref{def:quantized_reachable_set}. Let $N = |\mathcal Y_q|$ be the number of bins in the discretization: $|\mathcal Y|$ if categorical and the covering number of $\mathcal Y$ otherwise. Since our algorithm will be based on sampling, we need a \emph{probabilistic} notion of error, which we define using $\mathcal Y_q$. In particular, the user sets a small threshold $p \in (0,1)$ that tunes the precision of the estimate. Then, to construct the $p$\emph{-approximation} of $\mathcal{R}(x_0, \mathcal U,t)$ (hereon $\mathcal R_t$), we keep all points lying in the bins $y_{\text{bin}} \in \mathcal Y_q$ with probability mass $p_{y,t}(y_{\text{bin}})\geq p$; any bin with density $<p$ is considered negligible and discarded. We formally define $p$-approximation below:

\begin{definition}[\orange{$p$-Approximate Measurement Value}]
\label{def:pr} 
Alternative and equivalent definitions are given for the categorical and $\gamma$-quantized cases. 
\begin{itemize}[noitemsep,leftmargin=.5cm]
    \item \textbf{Categorical:} The ($p$)-approximation of $\mathcal{R}_t$ is
        $\orange{\mathcal{R}_{t,p}} := \{ y \mid y \in \mathcal{\mathcal Y}, \ p_{y,t}(\{y\}) \geq p \}$.
    
    \item $\boldsymbol{\gamma}$\textbf{-Quantized:} The ($p, \gamma$)-approximation of $\mathcal{R}_t$ is $\orange{\mathcal{R}^{\gamma}_{t,p}} := \{ y \mid \exists y_{\text{bin}} \in \mathcal{Y}_q: y \in y_{\text{bin}} \land \ p_{y,t}(y_{\text{bin}}) \geq p \}$,
    where $y_{\text{bin}} \in \mathcal Y_q$ is an $\infty$-ball of radius $\gamma/2$ in $\mathcal Y$'s cover.
\end{itemize}
\end{definition}

Now, using $\mathcal R_{t,p}^{(\gamma)}$, we state the sample complexity bound. For brevity, we combine categorical and quantized versions of the result into the below Theorem (see \cref{app:proof_quantized} for separate statements):

\begin{mytheorem}{Sample complexity bound, coarse-grained reachability}{abstract}
     Let $\mathcal Y_q$ have finite cardinality $N$. Let $m$ be the number of i.i.d. samples drawn from $Y_t$, \ie $\{y_i\}_{i=1}^m$ with each $y_i\in\mathcal Y\ \forall i$. Fix $\delta \in (0,1)$. If 
    \begin{equation}\label{eqn:discrete_bound}
        m \geq \max\left (N, 
        \frac{\log (\delta / N)}{\log(1-p)}
        \right ),
    \end{equation}
    then $\mathbb P(\mathcal R_{t,p}^{(\gamma)} \subset \hat{\mathcal R}_t) \geq 1-\delta$, where $\hat{\mathcal R}_t = \{y_i\}_{i=1}^m$ if categorical, and $\hat{\mathcal R}_t = \cup_{i=1}^m \mathcal B_\infty(y_i, \gamma)$ if quantized \textcolor{gray}{($\mathcal B_\infty(y_i, \gamma)$ is the $\infty$-ball centered at $y_i$ with radius $\gamma$)}.
    Proof in \cref{app:proof_quantized}.
\end{mytheorem}

\Cref{mythm:abstract} shows that, after $m$ samples, one is confident that all reachable bins in $\mathcal Y_q$ have been covered; the confidence is given by $1-\delta$, and the precision of the estimate by $p$. Then, if a target set $\mathcal Y^*$ is not included in $\hat{\mathcal R}_t$, that $\mathcal Y^*$ is unreachable with probability $\geq 1-\delta$. By construction, $\hat{\mathcal R}_t$ is a \emph{tight estimate} of the true $\mathcal R_{t}^{(\gamma)}$: all points in $\hat{\mathcal R}_t$ are guaranteed to lie within $\gamma$ of a reached point. 

\subsection{Monte Carlo Sampling Algorithm for Reachability in Dialogue Processes}
\label{sec:reachability_alg}
\Cref{mythm:abstract} provides the foundation for a Monte Carlo algorithm to estimate a DP's reachable sets. 
The procedure to build the reachable set estimate $\hat{\mathcal R}_t$ is stated in \cref{alg:mc_reachability}. It shares the same structure for both discrete and continuous cases with case-specific post-processing of the samples. 

Under sample size requirements from \cref{mythm:abstract}, \cref{alg:mc_reachability} guarantees $\mathbb{P}(\mathcal R_{t,p}^{\gamma} \subset \hat{\mathcal{R}}_t^{(m)}) \geq 1-\delta$. Crucially, this guarantee uses no information about the underlying model, nor the dynamics. Then, the number of trajectories $m$ does not rely on whether the model is stochastic\footnote{For deterministic $\phi$, $\hat{\mathcal R}_t^{(m)}$ reflects \emph{exact (or within-$\gamma$) controllability}, and $p$-approximation only captures sampling uncertainty. For stochastic $\phi$, $\hat{\mathcal R}_t^{(m)}$ reflects controllability \emph{in-distribution}: $p$ reflects stochasticity of both sampling and $\phi$. See \cref{sec:stochastic} for discussion.} nor the timestep. In \cref{app:empirical_controllability}, we empirically validate the bound in \cref{mythm:abstract}. In \cref{app:sample_growth}, we analyze dependence of $m$ on all parameters, finding $m \sim O(N\log N)$.

\section{Monte Carlo controllability for dialogue processes}
\label{sec:controllability}
Having established reachable sets for individual initial states (\cref{alg:mc_reachability}), we now quantify controllability of the system. Since controllability entails reaching a subset $\mathcal Y'\subset \mathcal Y$ from \emph{all} initial states, one can naturally estimate controllable sets by intersecting the reachable sets of $k$ initial states. To measure how controllable the system is, we first define \emph{partial controllability} \citep{sontag_control}, then derive a PAC bound (\cref{mythm:controllability}) on the DP's partially controllable set. 
This yields \cref{alg:mc_controllability}, an algorithm for controllable set estimation. As \cref{alg:mc_controllability} will rely on reachable set estimates that are $p$-approximate (\cref{def:pr}), we port over the notion of $p$-approximate for controllability. Since we focus on the discrete bottleneck, we directly consider the discretization $\mathcal Y_q$ of the measurement space $\mathcal Y$.


To formalize \emph{partial controllability}, we introduce a measure $\mu: \mathcal Y_q \to [0,1]$, where $\mu(y_{\text{bin}})$ quantifies the \emph{true proportion} of initial states that $p$-approximately reach a $y_{\text{bin}}$ in $\mathcal Y_q$. Namely, $\mu: y_{\text{bin}} \mapsto \mathbb P_{x_0 \sim p_0}[y_{\text{bin}} \in$  \orange{$\mathcal R_{t,p}^\gamma(x_0)$}]. We use $\mu$ to define the $\alpha$\emph{-controllable set} $\mathcal C_t^\alpha$: the set of bins in $\mathcal Y_q$ that are \orange{$p$-approximately reached} by a large proportion ($\geq 1-\alpha$) of initial states $x_0$. Formally,

\begin{definition}[$\alpha$-controllable set]
\label{def:partial_controllable_set}
    Given a control system $(\phi, \green{\mathcal T}, \pink{\mathcal X}, \blue{\mathcal U}, \purple{h},  \orange{\mathcal Y})$, the $\alpha$-controllable set $\mathcal C_t^{\alpha} \subseteq \mathcal Y$ at time $t$ is given by $\mathcal C_t^{\alpha} = \{y_{\text{bin}} \mid \mu(y_{\text{bin}}) \geq 1-\alpha\}$.
\end{definition}

\begin{myexample}{$\alpha$-controllable set}{controllable}
    Consider an initial state space $\{\texttt{BOS}, \text{Hi}, \text{Hello}, \text{Hey}\}$ with possible inputs ``Generate a sentence of length $\{1\cdots 10\}$." Say each outcome in \texttt{length}=$\{2, 3, 5, 8\}$ is $p$-approximately reached by exactly $3$ states ($75\%$ of states, they don't have to be the same ones each time). Then, for $\alpha=0.25$ (\ie $25\%$), the $\alpha$-controllable set $\mathcal C_t^\alpha$ (\cref{def:partial_controllable_set}) is \texttt{length}=$\{2, 3, 5, 8\}$. 
\end{myexample}

We aim to estimate the true $\alpha$-controllable set $\mathcal C_t^\alpha$ by sampling $k$ initial states, sampling their reachable sets, and taking their intersection: $\hat {\mathcal C}_t = \cap_{i=1}^k \hat{\mathcal R}_t(x_0^{(i)})$. The key question is how many initial state samples $k$ are needed to approximate $\mathcal C_t^\alpha$ with high confidence. To answer the question, we need a notion of \emph{approximation error} between the estimated $\hat{\mathcal C}_t$ and the target $\mathcal C_t^\alpha$. A natural choice is the measure under $\mu$ of \emph{false positives} in $\hat{\mathcal C}_t$ with respect to $\mathcal C_t^\alpha$, formally, $\mu(\hat{\mathcal C}_t \backslash \mathcal C_t^\alpha)$.\footnote{There are no false negatives --- $\hat{\mathcal C}_t$ is a strict overapproximation of $\mathcal C_t^\alpha$, as $\hat {\mathcal C}_t \supset \mathcal C_t^\alpha$ by construction --- taking successive intersections shrinks the running controllable set until convergence.} With enough samples, we can bound $\mu(\hat{\mathcal C}_t \backslash \mathcal C_t^\alpha)$ below a desired $\epsilon \in (0,1)$ with high probability:

\begin{mytheorem}{Sample complexity bound, $\alpha$ controllable set}{controllability}
    \label{thm:controllability}
    Fix $\epsilon, \delta_C, p, \alpha \in (0,1)$ \textcolor{gray}{(and $\gamma \in \mathbb R_+$)}. Let $\delta_R$ be the confidence from \cref{mythm:abstract}. Given $k$ initial states $\{x_0^{(i)}\}_{i=1}^k \overset{\text{i.i.d.}}{\sim} p_0$ and their reachable set estimates $\{\hat{\mathcal R}_t(x_0^{(i)})\}_{i=1}^k$, if 
    \begin{equation}
        k \geq \frac{\log \epsilon \delta_C}{\log (1-\alpha)},
    \end{equation}
    then $\mathbb P(\mu(\hat{\mathcal C}_t \setminus \mathcal C_t^{\alpha}) < \epsilon) \geq (1-\delta_C)(1-\delta_R)^k$, where $\hat{\mathcal C}_t = \bigcap_{i=1}^k \hat{\mathcal R}_t(x_0^{(i)})$.
    Proof in \cref{app:controllability}.
\end{mytheorem}

\Cref{mythm:controllability} is distribution-free, only needing $\mathcal Y_q$ to be finite and consistent over initial states. Like \cref{mythm:abstract}, \cref{mythm:controllability} is applicable to any control system, including stochastic ones. See \cref{app:hypothesis_testing} for how to use \cref{mythm:abstract,mythm:controllability} for hypothesis testing.

\subsection{Monte Carlo sampling algorithm for controllable sets}
\Cref{alg:mc_controllability} outlines the procedure to estimate the controllable set by intersecting the reachable sets of $k$ initial states. Note that in \cref{mythm:controllability}, the overall confidence $1-\delta:=(1-\delta_C)(1-\delta_R)^k$ depends on confidences $\delta_R$ on each reachable set and $\delta_C$ related to sampling enough initial states. Given $\delta$, we automatically select $\delta_C$ and $\delta_R$ to minimize the total samples $n = m\cdot k$. See \cref{app:controllability} for hyperparameter selection and impact on sample complexity.








\begin{algorithmgroup}[t]
\footnotesize
\begin{minipage}[t]{0.48\textwidth}
\begin{algorithm}[H]
    \renewcommand{\thealgocf}{\arabic{algocf}a}  
    \caption{Monte Carlo Reachability Estimation for Dialogue Processes}
    \label{alg:mc_reachability}
    \addtocounter{algocf}{-1}  
    \KwIn{Initial state $x_0$, confidence level, probability threshold $(\delta,p)\in(0,1)^2$}
    \KwIn{If $\gamma$-quantized: quantization parameter $\gamma>0$}
    Compute sample size $m$;\hfill\textcolor[HTML]{aaaaaa}{// \cref{mythm:abstract}}\\
    \textcolor[HTML]{aaaaaa}{// Sample generation}\\
    \For{$i=1$ \KwTo $m$}{
        Integrate DP to time $t$: $x_t = \phi_t(x_0, u)$ sampling input sequence $u \sim p_U$\;
        Evaluate output: $y_t = h(x_t, u_t)$\;
    }
    \Return{$\hat{\mathcal{R}}_t^{(m)} = \bigcup_{i=1}^m \mathcal{B}_\infty(y_t^{(i)}, \gamma)$ \textbf{if} $\gamma$-quantized reachability \textbf{else} $\{y_t^{(i)}\}_{i=1}^m$}\;
\end{algorithm}
\end{minipage}
\hfill
\begin{minipage}[t]{0.48\textwidth}
\begin{algorithm}[H]
    \renewcommand{\thealgocf}{\arabic{algocf}b}  
    \caption{Monte Carlo Controllability Estimation for Dialogue Process}
    \label{alg:mc_controllability}
    \addtocounter{algocf}{-1}  
    \setcounter{AlgoLine}{0}
    \KwIn{Confidence level, partial controllability, error, probability threshold $(\delta,\alpha,\epsilon,p)\in(0,1)^4$}
    \KwIn{If $\gamma$-quantized: quantization parameter $\gamma>0$}
    Compute initial state $k$, reachable set sample sizes $m$;\hspace{3mm}\hfill\textcolor[HTML]{aaaaaa}{// \cref{mythm:controllability}, see \cref{app:controllability}}\\
    \textcolor[HTML]{aaaaaa}{// Sample generation}\\
    \For{$i=1$ \KwTo $k$}{
        Sample initial state: $x_0^{(i)} \sim p_0$\;
        Sample reachable set: $\mathcal {\hat R}_t^{(m)}(x_0^{(i)});$ \hfill \textcolor{gray}{// \cref{alg:mc_reachability}}
    }
    \Return{Empirical controllable set: $\hat{\mathcal{C}}_t = \bigcap_{i=1}^k \mathcal {\hat R}_t(x_0^{(i)})$}\;
\end{algorithm}
\end{minipage}
\vspace{-2mm}
\caption{Pseudocode for Monte Carlo reachability (left) and controllability (right) estimation.}
\label{alg:algorithms}
\end{algorithmgroup}

\section{Experiments}
\label{sec:experiments}
While \cref{mythm:abstract,mythm:controllability} apply to arbitrary control systems, we demonstrate them on contemporary LLMs and T2IMs, for tasks of varying complexity and various prompting interventions. For space reasons, we show results for greedily decoded LLMs and T2IMs here, with stochastic results in \cref{app:stochastic_results}.

\textbf{Evaluation Metrics}\quad  
We collect metrics on the estimated controllable set to \textit{inform} potential {controller design}. A controller maps a desired $y^* \in \mathcal Y$ to a control input $u$ that drives $x_0 \mapsto y^*$. To test existence, we measure the estimated $\hat{\mathcal C}_t$'s \textbf{coverage} of $\mathcal Y$, $\text{cvg} \triangleq |\mathcal Y \cap \hat{\mathcal C}_t| / |\mathcal Y| \in [0,1]$ (\textcolor{AppleGreen6}{$\uparrow$} better). With confidence $\delta$, $\text{cvg}$ implies a controller exists from $1-\alpha$ of initial states to a $\text{cvg}$-fraction of $\mathcal Y$ (\cref{def:controllability}). For each $x_0$, we also assess its functional properties as proxies for \textbf{calibration}: Spearman $\rho(u_0,y_t)$ for monotonicity, Pearson $R(u_0,y_t)$ for linearity, and MAE$(u_0,y_t)$ for identity. Ideally, metrics are stable across different initial prompts $x_0$; while we report them, we leave controller design for future work as our focus is on controllability.

\subsection{Controlling Formality in LLMs}
\label{sec:formality}
\textbf{Setup}\quad  We request an LLM to generate text with a given formality, engaging the LLM in dialogue until the goal is reached. We test  $0$-shot and $5$-shot prompting for the initial user request $u_0 \sim \mathrm{Unif(0, 1)}$. Subsequent feedback $u_{t\geq 1}$ deterministically fills a template with the previous output $y_{t-1}$. For instance, if the last scored formality was incorrect, the user feedback is \textit{``Your answer was too [formal$\mid$informal]. I asked for a story of formality $u_0$, and you produced a story of formality $y_{t-1}$''}.  See \cref{app:distributions} for all prompt templates.
We test three instruction-tuned LLMs: \smollm \citep{smollm3}, \qwen  \citep{yang2025qwen3technicalreport}, \gemma  \citep{gemmateam2025gemma3technicalreport}.

As {\pink{initial states}} we sample $x_0$ from Mistral-7B-v0.1-Instruct~\citep{jiang2023mistral}, asking for a conversation opening. While \citet{wolf2024fundamentallimitations} find that, when prompts semantically conflict with the task, LLMs need longer interaction to succeed, our setting is shielded from this effect as our $x_0$ are unrelated to task semantics. As {\purple{readout map}} we use a formality neural scorer~ \citep{formality-classifier}\footnote{We verify that the requested formality values are reachable with some input. \textit{E.g.,} we manually ensured there are solutions to ``Generate a story with formality $0.0$" (as well as $1.0$).}. We use the same controllability and reachability hyperparameters for all experiments, see \cref{tab:hyperparameters}.

\begin{figure}[tb]
    \centering
    \includegraphics[trim={0mm 2mm 0mm 2mm}, clip, width=.95\linewidth]{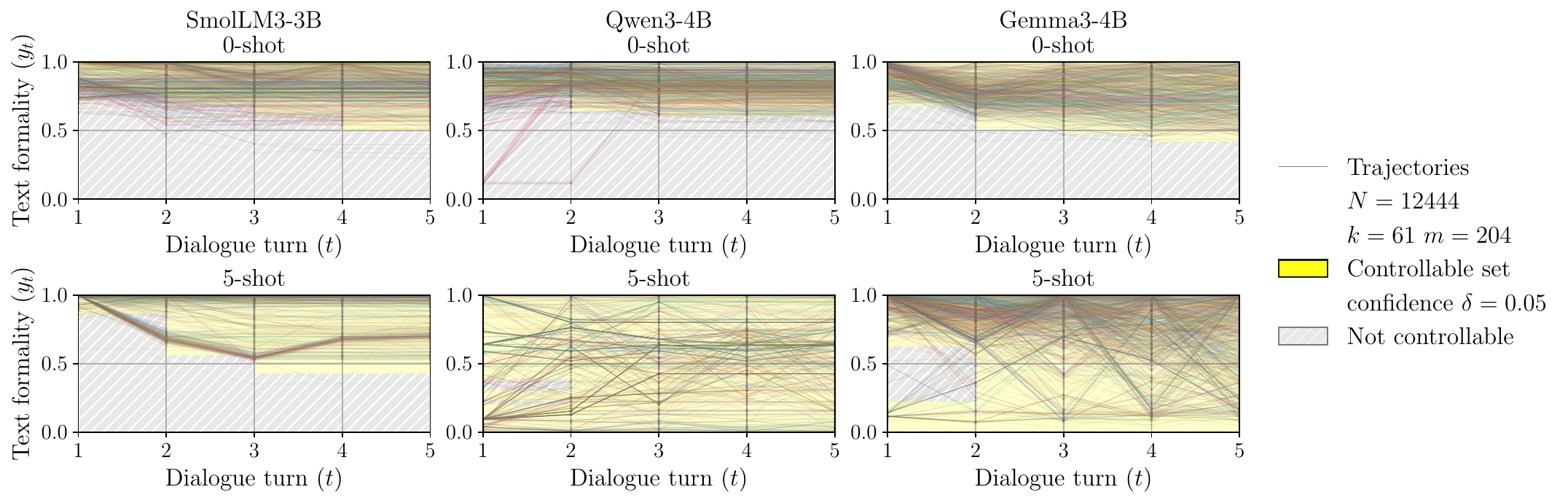}
    \includegraphics[trim={0mm 0mm 0mm 0mm}, clip, width=\linewidth]{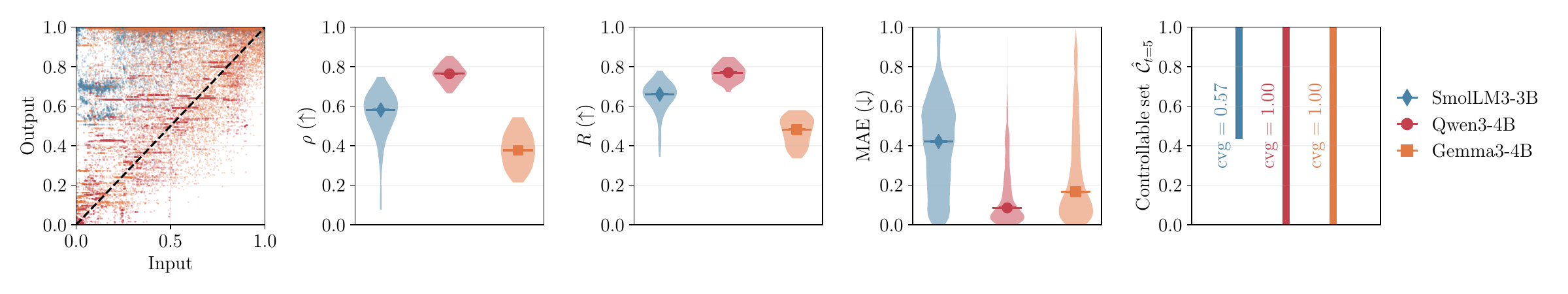}
    \vspace{-6mm}
    \caption{\footnotesize \textbf{(Top, Middle) 5-turn Dialogue Process trajectories for formality task}. Controllable set dynamics are shown for (left to right) models \smollm, \qwen, and \gemma on a text formality control task, using 0-shot (top) and 5-shot (bottom) prompting as the initial input. Each linecolor represents a different initial state. The $\alpha$-controllable sets ($\alpha=0.1$) are shown in yellow, where full controllability (best-case) would be seen by an entirely yellow $t=5$. While \emph{none of the models are fully controllable 0-shot}, and all show a formal bias, \gemma and \qwen are the \emph{most controllable} with $5$ shots by $t=5$ (confidence $\delta=0.05$). 
    \textbf{(Bottom) Summarized metrics} for 5-shot at $t=5$. The left figure shows the final output in the DP as a function of the requested input. The next figures show violin plots of each metric on the formality task, where each point is a metric for a single $x_0$, demonstrating \qwen is the most controllable and faithful to the user request for this setting ($\mathrm{cvg}=1.0$, median $\mathrm{MAE}=0.09$).
    }
    \vspace{-2mm}
    \label{fig:controllability_summary}
\end{figure}



\textbf{Controllability is not guaranteed}\quad  \cref{fig:controllability_summary} (top, middle) rows show formality trajectories for a 5-turn DP. We test $0$-shot (top) and $5$-shot (middle) prompting, where the $\alpha$ controllable set ($\alpha=0.1$) is highlighted in yellow. In the $0$-shot setting, none of the models is fully controllable within $5$ steps, although we observe growth of the controllable set. In the $5$-shot setting, \qwen and \gemma reach full controllability by $t=5$ with confidence $\delta=0.05$. On this task, and with the given control, \smollm is not controllable. 
The left-most plot in \cref{fig:controllability_summary}-(bottom) shows the observed outputs as a function of the inputs, evidencing the different model behaviors as well as the poor faithfulness of outputs with respect to inputs on this task.
The remaining four figures show averaged metrics across initial states. Both \qwen and \gemma are controllable (cvg$=1.0$). However, these models show differences in calibration, represented by $\rho, R$ and MAE, \qwen being the most faithful to user requests. 

\textbf{Examples or feedback?}\quad  Whether feedback (0-shot, 5 turns) or examples (5-shots, 1 turn) is more important for controllability highly depends on the model. Indeed, for \qwen and \gemma, examples matter more than feedback, seen by wider yellow coverage for $t=1$ ($5$-shot prompting, middle row) than for $t=5$ ($0$-shot prompting, top row); the opposite pattern holds for \smollm.

\textbf{Dialogue results in overshoots}\quad 
The trajectories in \cref{fig:controllability_summary} show strong formality overshoots even with the favorable feedback given, which contains both the requested formality $u_0$ and the produced formality $y_{t-1}$. One would expect the model to \textit{converge} to the target formality, however it is not the case. The overshoot effect is more visible in the 5-shot setting both for \qwen and \gemma. 

\textbf{Larger models are more controllable}\quad Controllability closely relates to the expressivity of the system \citep{kalman_rank}. As larger LLMs are more behaviorally and representationally expressive \citep{Biderman2024LessonsFT,cheng_emergence_2024}, we tested whether they are more controllable. \cref{fig:formality_size} shows the controllability of Qwen sizes from 0.6B to 14B ($0$-shot, 5 turns), where indeed controllability (cvg) and calibration to the user request ($\rho$, $R$, MAE) increase with size. However, calibration metrics saturate at 4B parameters, suggesting performance gains on text formality are most salient at small sizes. Overall, calibration is poor even for the 14B model, with an MAE $\approx 0.25$ from the request. For reference, the error tolerance for this experiment is $\gamma=0.1$, and uniformly generated text formalities would have MAE$=\mathbb E[|X-Y|] = \frac{1}{3}$ where $X$ and $Y$ are both distributed as $\text{Unif}[0,1]$.

\subsection{Controlling Number of Objects in T2Is}
\label{sec:objcount}
\textbf{Setup}\quad In this task, we query T2IMs to generate ``{White background. [\texttt{N}] [\texttt{obj}]s.}'' with $\texttt{N}\sim \text{Unif}\{0\ldots 20\}$ and for all $\texttt{obj}\in\{\text{80 COCO classes}\}$. As {\pink{initial states}} we use $x_0=$ \texttt{BOS}, and as {\purple{readout map}} we use a 0-shot object detector~\citep{minderer2023scaling}.

\textbf{Reachability and faithfulness are sensitive to task semantics}\quad 
\Cref{fig:reachability_zoom_in} shows (left) the input-output faithfulness averaged across the 80 different objects requested, and the averaged statistics across objects in the other plots. Note the large variance in all plots, showing that the requested object biases how desired outputs and actual generated values are correlated. In general, we observe that controlling the number of objects is harder than expected, with \fluxs showing the best behavior in this setting with a MAE of 3.52 and a strong calibration shown by $\rho, R>0.9$.

\begin{figure}
    \centering
    \includegraphics[trim={2mm 2mm 2mm 11mm}, clip, width=\linewidth]{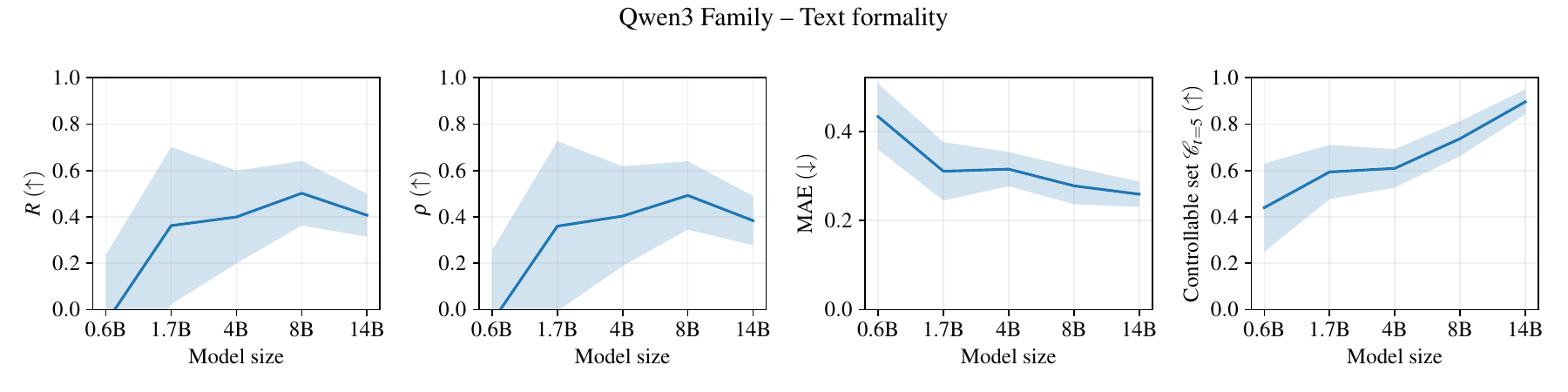}
    \vspace{-3ex}
    \caption{\footnotesize \textbf{Larger models are more controllable and calibrated on text formality.} For Qwen sizes ranging from 0.6B to 14B (x-axis), we requested text formalities ranging in $[0,1]$, with 0-shot prompting and one dialogue turn. While controllability (right) increases reliably up to 14B, the correlation (left plots) between the user request and the output formality, given by $R$, plateaus around 8B. All calibration metrics ($R$, $\rho$, MAE) increase most drastically for smaller sizes (0.6B $\to$1.7B) and appear to saturate for larger sizes.}
    \label{fig:formality_size}
\end{figure}

\begin{figure}[tb]
    \centering
    \includegraphics[trim={0mm 5mm 0mm 5mm}, clip, width=\linewidth]{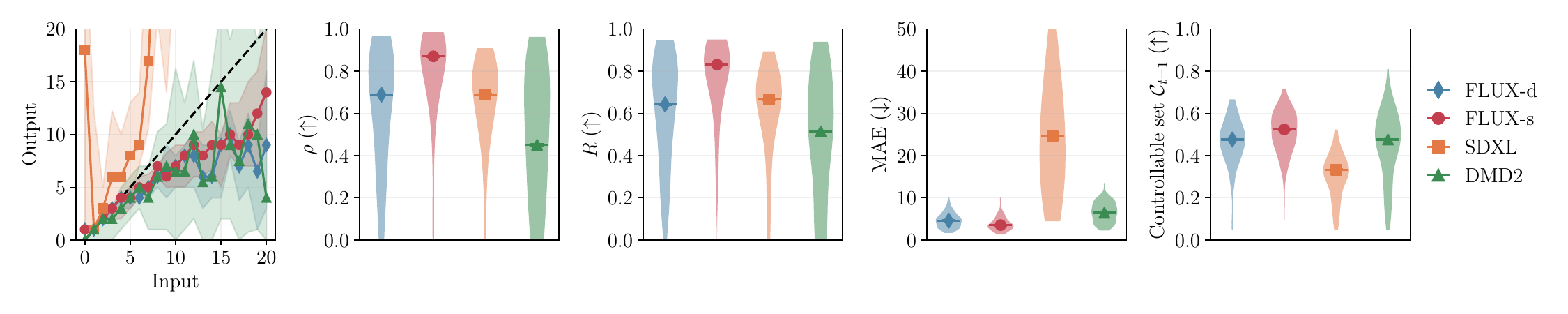}
    \vspace{-3.5ex}
    \caption{\footnotesize \textbf{Object generation task for T2IMs.} We prompt the model with ``{White background. [\texttt{N}] [\texttt{obj}]s.}'' with $\texttt{N}=\{0\ldots 20\}$ and $\texttt{obj}=\{\text{80 COCO classes}\}$. The left figure shows the average \textit{output} object count as a function of the requested \textit{input}. The next figures contain violin plots of each evaluation metric $\forall\texttt{obj}$, evidencing differences in models. Notably, \fluxs achieves a median $\rho, R > 0.9$ and a median $\mathrm{MAE}=3.52$, showing a much better controllability and faithfulness than the rest. 
    }
    \vspace{-2mm}
    \label{fig:reachability_zoom_in}
\end{figure}

\subsection{Additional Experimental Results}
In addition to formality and object count~(\cref{sec:formality,sec:objcount}), we test controllability for more tasks, shown in~\cref{app:extended_results} for space reasons. For LLMs, the tasks are: generate a \li{1} positive $\{\text{even}, \text{odd}\}$ integer;  \li{2} string of length $\{1\cdots10\}$ and \li{3} sentence whose average word length is $[2.0, 10.0]$. For T2IMs, the tasks are: \li{4} white background with an object at a specific location in \{top left, top right, bottom left, bottom right, center\} for all 80 MS-COCO object categories, and \li{5} an image with $[0, 1.0]$ saturation. Note that tasks \textbf{(i,ii,iv)} are discrete and \textbf{(iii,v)} are quantized.

These additional experiments corroborate the conclusions reached in \cref{sec:formality,sec:objcount}. Controllability (and calibration) are highly task dependent, for example \gemma achieves strong results on \textbf{(i,ii)}, with almost perfect calibration, while the same model shows poor calibration for formality. Task \li{3} appears harder, with \gemma showing much better controllability, as evidenced by a stark difference in trajectories in ~\cref{fig:extended_deterministic_llm_results}. Controllability and calibration also improve with model size, shown for the string length and average word length tasks \textbf{(i, iii)} in \cref{fig:num_chars_size,fig:average_word_length}. Interestingly, whether the LLM was greedily decoded or sampled did not affect the high-level trends, see \cref{app:stochastic_results}. T2IMs show poor controllability of object location \li{4}, worse than object count; while image saturation \li{5} is not controllable. \cref{app:distributions} contains all the textual templates used, and \cref{app:package} an example of our toolkit code.

\textbf{Summary of Experimental Findings}\quad Our extensive experiments reveal that controllability is a fragile and inconsistent property in modern generative models. This fragility is evident even on the seemingly simple tasks we designed, which represent a lower bound on the complexity of real-world applications.
We found that no single model or prompting strategy guarantees control across all tasks. For instance, while \gemma and \qwen achieved full controllability on the formality task with 5-shot examples, they required iterative dialogue to improve, and \smollm remained uncontrollable. Conversely, for T2IMs, even the best-performing model (\fluxs) exhibited significant errors in object counting and failed to control for object location, demonstrating that controllability is highly sensitive to both task semantics and the chosen model.
Ultimately, these results validate the utility of our proposed framework for identifying controllability failures.

\section{Limitations and Conclusion} 

\textbf{Limitations}\quad Guarantees from the proposed framework hold for a given control system, whose input distribution, readout map, and initial state distribution are defined by the practitioner. Guarantees do not transfer to new choices of these variables. Thus, task-specific takeaways from our experiments are only relevant to the setting in that task. The practitioner is responsible for their choice of input distribution, readout map, and initial state distribution for their specific use-case.

Our theoretical framework rigorously estimates a DP's controllable set with guarantees (\Cref{mythm:abstract,mythm:controllability}). As a practical byproduct, \Cref{alg:mc_reachability,alg:mc_controllability} return all sampled trajectories, including inputs, states, and measurement values. This reveals which inputs elicit which measurement values, as well as regions that are systematically uncontrollable (\Cref{sec:experiments}). However, because we treat the models as black-box, our framework does \emph{not} provide interpretability tools that causally diagnose controllability failures on model internals; this is outside the scope of our work.

In \Cref{mythm:abstract}, the sample complexity to estimate the reachable set scales with its covering number $N$. When estimating joint reachability over $d$ attributes, the covering number $N$ of the reachable set grows not with the attribute space's extrinsic dimension $d$, but rather its \emph{intrinsic dimension} \citep{Kgl2002IntrinsicDE}, which is typically lower in practice. This alleviates sample complexity explosion due to the curse of dimensionality to some extent. Still, \Cref{mythm:abstract} does not scale well when estimating intrinsically complex reachable sets with high precision. This remains an important open problem not only for our setting, but for high-dimensional reachable set estimation in general \citep{Bansal2020DeepReachAD,Lin2023VerificationON, pmlr-v120-devonport20a}. As a workaround, we recommend constructing a quantization $\mathcal Y_q$ of the attribute space with tractable cardinality $N$; for instance, $\mathcal Y_q$ corresponding to binary safe vs.~unsafe regions of attribute space ($N=2$).


\textbf{Conclusion} \quad This work challenges the implicit assumption of controllability that is fundamental to current efforts in generative AI. We introduce a formal framework, grounded in control theory, and a practical algorithm to estimate the controllable sets of any opaque model with statistical guarantees. Our empirical analysis reveals that controllability is not a given but a fragile property, highly dependent on the model, task, and initial state (prompt). We therefore argue for a paradigm shift where controllability moves from an implicit assumption to an explicit object of analysis. By providing an analysis toolkit, we set grounds for developing safer and more reliable controllable AI. We foresee potential uses in generative model safety and compliance that include rigorously comparing different control mechanisms, estimating reachable sets under adversarial inputs, enforcing controllability during training, and accounting in policy and deployment.

\clearpage
\section*{Reproducibility Statement}
To ensure reproducibility, we base our work on open source libraries and public datasets, we will also make our code publicly available both in GitHub and as a python package. In addition, we provide the main algorithms for Monte Carlo reachability and controllability in~\cref{alg:algorithms}, details on the experimental setup in~\cref{app:experiment_details}, input distributions related to each task in~\cref{app:distributions}, and a sample of the python package in~\cref{app:package}.

\section*{Ethics Statement}
We adhere to ICLR's Code of Ethics. Our work introduces a theoretical framework and a codebase to assess the controllability of generative models. Our results show that model controllability is fragile and it calls to switch focus from simply controlling to first understanding their fundamental limits. We believe such shift in mindset is necessary to attain more robust, transparent, and safe generative models.

\bibliography{iclr2026_conference,emily}

\appendix

\renewcommand\thefigure{\thesection.\arabic{figure}}    
\renewcommand\thetable{\thesection.\arabic{table}}
\renewcommand\theequation{\thesection.\arabic{equation}}

\newpage
\section*{Appendix}
This appendix provides supplementary information and extended results that complement the main body of the paper. \Cref{app:extended_results} presents additional experimental results, including detailed $\alpha$-controllable set evolutions for various tasks and models, and further analysis of image generation tasks. \Cref{app:definitions} formalizes the definition of a control system used throughout our work. \Cref{sec:stochastic} comments on how to interpret reachability and controllability probabilistic guarantees for stochastic and deterministic systems. \Cref{app:da19_output} provides a detailed discussion and adaptation of Theorem 1 from \citet{devonport_data-driven_2019} for output reachability in discrete-time systems. The concept of a discrete bottleneck in dialogue processes, particularly for LLMs and T2IMs, is explored in \cref{app:discrete_bottleneck}. Detailed proofs for the continuous-valued and categorical versions of \cref{mythm:abstract} (reachability) are presented in \cref{app:proof_quantized}, along with an analysis of its sample complexity in \cref{app:sample_growth}. \Cref{app:expected_output} extends the reachability analysis to expected output values for stochastic systems. The proof of \cref{mythm:controllability} (controllability) is given in \cref{app:controllability}, with a discussion on auto-parameter selection for its hyperparameters in \cref{app:hyperparameters} and its sample complexity. \Cref{app:hypothesis_testing} illustrates how our framework can be applied to hypothesis testing. Finally, \cref{app:experiment_details} provides extended details on our experimental setup, including input distributions in \cref{app:distributions}, and \cref{app:package} introduces our open-source controllability package.
\FloatBarrier
\section{Extended results}
\label{app:extended_results}

\subsection{Setup}
We covered each setting in 
$$\text{LLMs}\times\text{Tasks}\times\text{Prompting method}\times\text{stochastic/deterministic},$$
where the prompting methods, all in dialogue, span \texttt{0-shot} and \texttt{5-shot} prompting, the model is held stochastic or deterministic (greedy decoding), and the LLMs and TASKs are given in \cref{sec:experiments}. Due to the large amount of experiments, and because $5$-shot prompting gave the best results, we show the controllability summaries (\eg\cref{fig:controllability_summary} bottom) for the 5-shot setting only, and full DP trajectories for the deterministic setting (both 0- and 5-shot). 

\subsection{Results: Deterministic systems}
Overall, the extended task results confirm the takeaways in \cref{sec:experiments}. We see significant heterogeneity between models, seen by the varying controllable set estimates in the rightmost plots in \cref{fig:string_length_summary}, \cref{fig:word_length_summary}, and \cref{fig:even_odd}. Especially striking is that, for the same model, performance on different tasks is not predictable: for instance, although \gemma excels at formality and average word length tasks (\cref{fig:controllability_summary,fig:word_length_summary}), it surprisingly is not controllable to the full measurement-value space for the $\{\text{even}, \text{odd}\}$ task, one that is trivial for humans. 

For a detailed temporal view, \cref{fig:extended_deterministic_llm_results} shows, for 0 and 5-shot prompting, the $\alpha$-controllable set's dynamic evolution for all deterministic LLMs$\times$TASKS not in \cref{fig:controllability_summary}. 

\begin{figure}[H]
\centering 
\includegraphics[width=\linewidth]{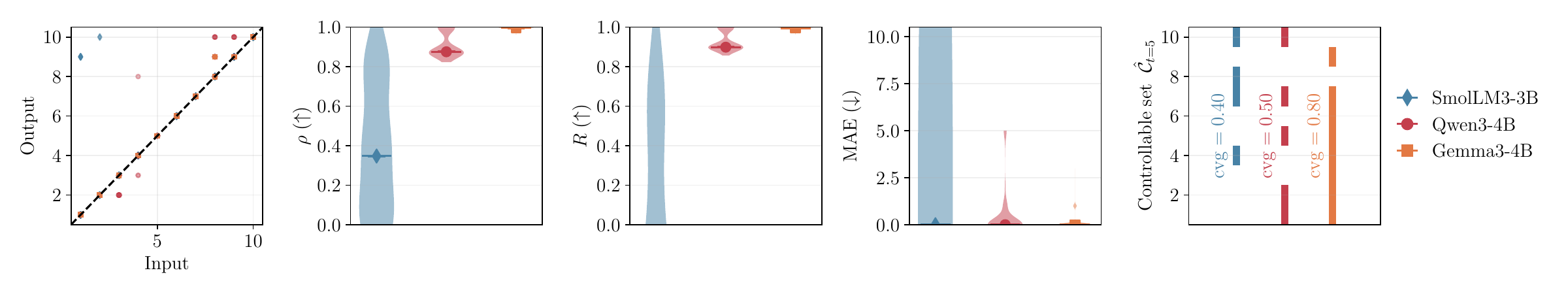}
\vspace{-8mm}
\caption{\textbf{Requesting string length in $\{1\cdots 10\}$ with 5-shot prompting.} We ask the LLM to generate a string of length $N$ characters, where $N \sim \text{Unif}\{1\cdots 10\}$. The controllable set estimates are shown on the (right), with \gemma displaying the highest $\alpha$-controllability ($\alpha=0.1$), to 80\% of the desired range. The distribution of string lengths $y_T$ is plotted with respect to the initial request on the (left). In general, especially for \qwen and \gemma, the generations are faithful to the request no matter the initial prompt, seen by points landing on the line $y=x$ (dashed). This is corroborated by the (middle) three plots, however, \smollm shows high variance of the faithfulness metrics $\rho$, $R$, and MAE across initial states.}
\label{fig:string_length_summary}
\end{figure}

\begin{figure}[H]
\centering 
\includegraphics[width=\linewidth]{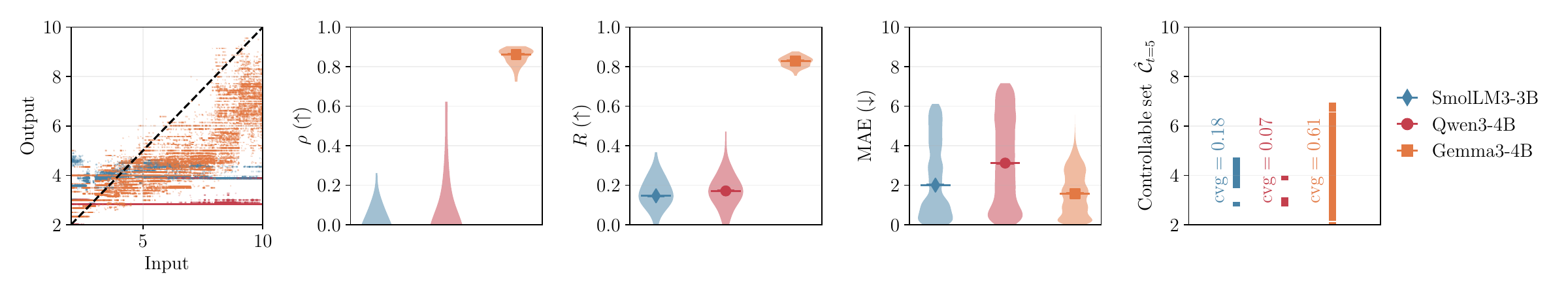}
\vspace{-8mm}
\caption{\textbf{Requesting an average word length in $[2.0, 10.0]$ with 5-shot prompting.} We ask the LLM to generate a sentence whose average word length is values in $[2.0, 10.0]$, up to an error of $\gamma=0.1$, with 5 shots in the initial input $u_0$. (Right) The controllable set estimates for all models. We find \smollm and \qwen to be controllable only to a small fraction (0.18 and 0.07, respectively), and \gemma to a larger fraction (0.61), of requests. (Left) The final output distribution $y_T$ is shown compared to the requested average word length; in general, models fail for larger requested lengths-- interestingly, \qwen tended to resort to ``default responses" such as ``A cat sat on a mat" or ``The quick brown fox jumped over the lazy dog." (Middle) The large spreads in $\rho$, $R$, and MAE, where each point represents the metric computed for a single initial state, demonstrate a large sensitivity of model response to the initial state.}
\label{fig:word_length_summary}
\end{figure}

\begin{figure}[H]
    \centering
    \includegraphics[width=\linewidth]{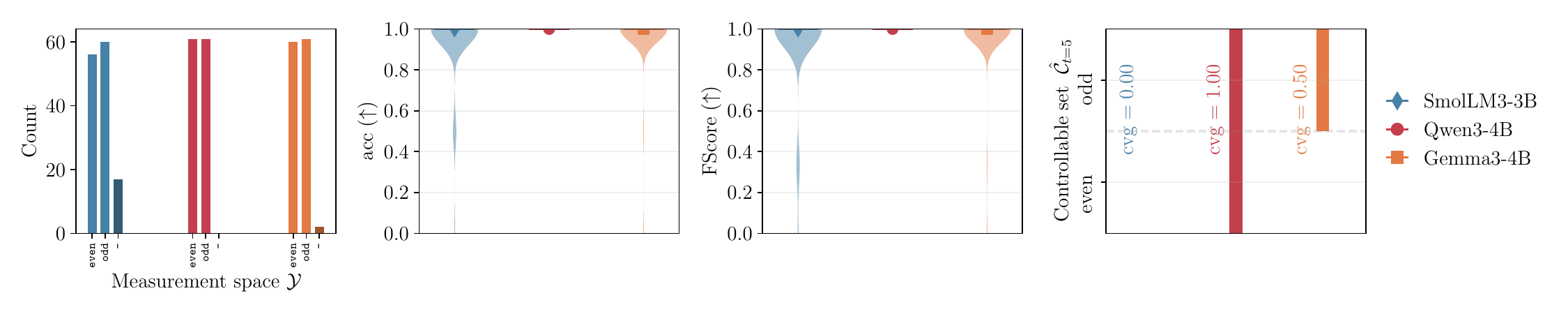}
    \vspace{-8mm}
    \caption{\textbf{Requesting an even/odd number with 5-shot prompting.} We ask the LLM to generate a positive even or odd integer, with 5 shots in the initial input $u_0$. (Left) We show the distribution of responses by $t=5$ turns, including an error category (righthand bars on each barchart), where the LLM response was not parse-able to an integer. The (middle) two plots show the LLM's faithfulness to the request, where each point depicts the mean accuracy or F1 score of a given initial state's reachable set. While (right) \qwen achieves perfect controllability of the $\alpha$-controllable set ($\alpha=0.1$) with these inputs, as well as perfect faithfulness to the request (middle), results for other models are sensitive to the initial $x_0$, seen by nonzero variance in the violin plots. Notably, near-perfect faithfulness does not necessarily imply controllability, seen \smollm: while it achieves a high accuracy and F1 score to the request on average (middle), it is \emph{not} controllable for this task (right).}
    \label{fig:even_odd}
\end{figure}

\begin{figure}[H]
    \centering
    \includegraphics[trim={2mm 2mm 2mm 11mm}, clip,width=\linewidth]{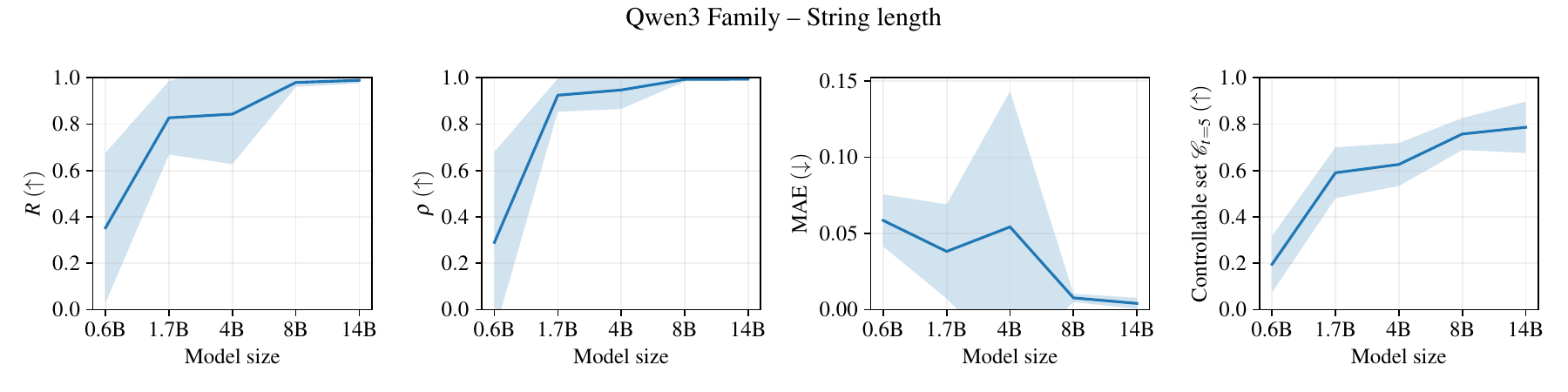}
    \vspace{-3ex}
    \caption{\textbf{Larger LLMs are more controllable and calibrated on string length.} For Qwen3 sizes ranging from 0.6B to 14B (x-axis), we requested string lengths ranging in $\{1\cdots 10\}$, with 0-shot prompting and 5 dialogue turns. Controllability (right) and calibration metrics (left) increase reliably up to 14B, though metrics (cvg, $R$, $\rho$) increase most drastically for smaller sizes (0.6B $\to$1.7B) and appear to saturate for larger sizes.}
    \label{fig:num_chars_size}
\end{figure}

\begin{figure}[H]
    \centering
    \includegraphics[trim={2mm 2mm 2mm 11mm}, clip,width=\linewidth]{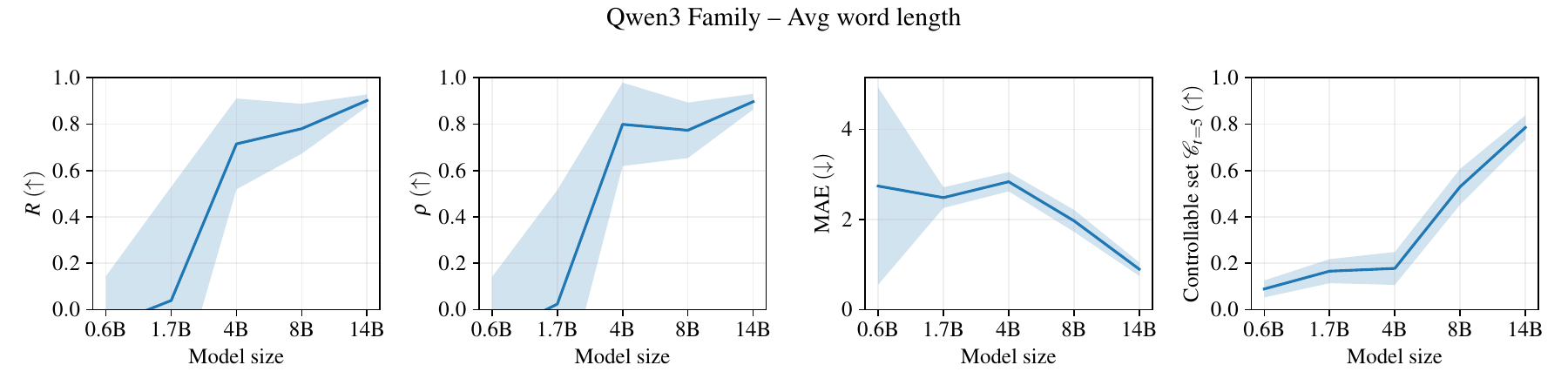}
    \vspace{-3ex}
    \caption{\textbf{Larger LLMs are more controllable and calibrated on average word length.} For Qwen3 sizes ranging from 0.6B to 14B (x-axis), we requested output text whose average word lengths range in $[2.0, 10.0]$, with 0-shot prompting and 5 dialogue turns. Controllability (right) and calibration metrics (left) increase up to 14B in a sudden phase transition at 4B where the model gains in performance.}
    \label{fig:average_word_length}
\end{figure}

\begin{figure}[H]
    \centering
    \includegraphics[width=\linewidth]{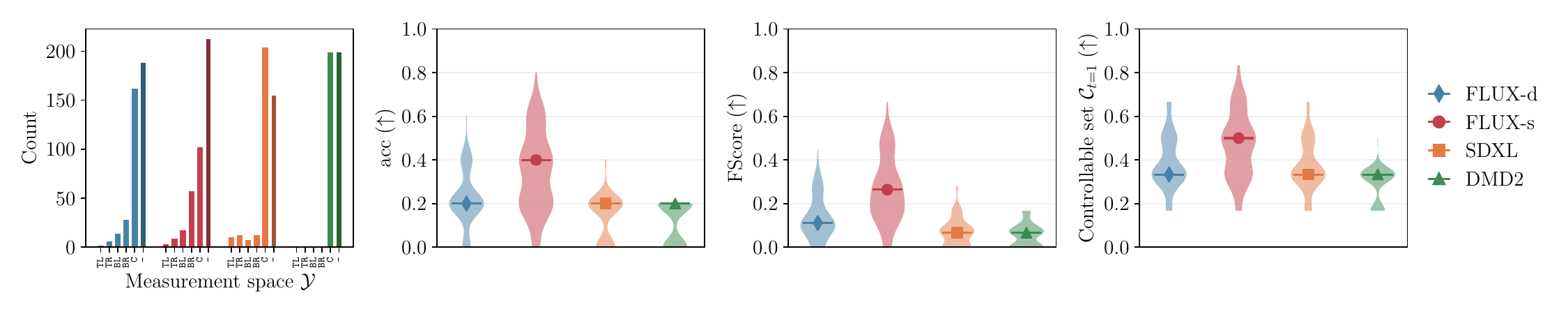}
    \vspace{-3mm}
    \caption{\textbf{Requesting objects at specific locations to T2IMs.} We query T2IM with the prompt ``White background. [\texttt{obj}] at the [\texttt{pos}] of the image.'', where {$\texttt{pos}\sim\text{Unif}(\text{top left, top right, bottom left, bottom right, center})$}, for all \texttt{obj} in the 80 COCO classes. As measurement, we divide the image in a grid of $3\times 3$ (each area of width and height $\frac{1}{3}$rd of the image) and we use an object detector to determine if the object is in one of the requested areas. We observe that T2IM struggle at placing objects at specific locations, even in the simple setting provided. In the (left) figure we see that models tend to place objects in the center (\texttt{C} in the x-axis), followed by bottom right and left (\texttt{BR,BL}) and finally top right and left (\texttt{TR,TL}). The darker lines are the count of objects not placed in any of these positions. The other plots show the averaged statistics per object, showing that \fluxs is  theoretically controllable (they reach the whole output space with maximal coverage and show better accuracy). However, these results are far from perfect controllability, underscoring the need for rigorous analysis of models.} 
    \label{fig:object_position}
\end{figure}

\begin{figure}[H]
    \centering
    \includegraphics[width=\linewidth]{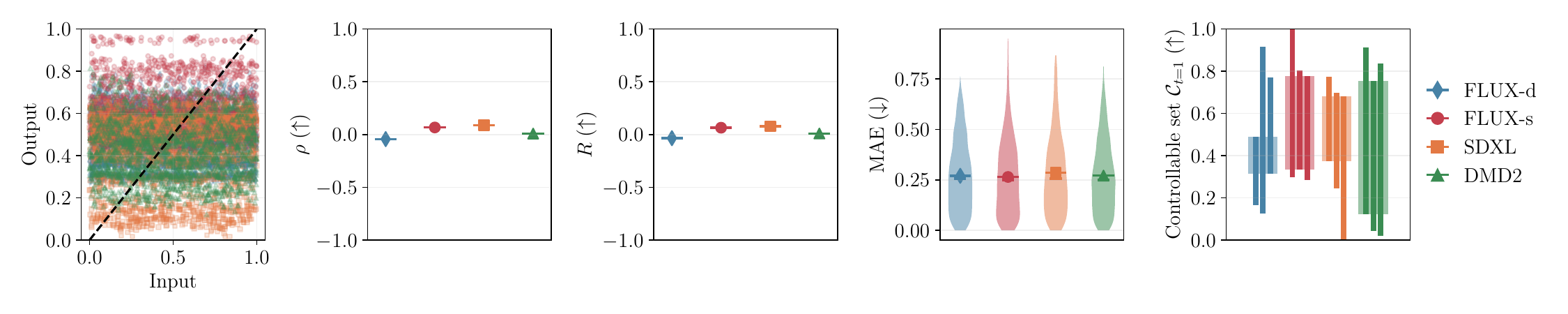}
    \vspace{-3mm}
    \caption{\textbf{Requesting image saturation to T2IMs.} We query T2IM with the prompt ``Generate an image with [$100\times\texttt{sat}$]\% saturation.'', where $\texttt{sat}\sim\text{Unif}[0, 1.0]$. This is an example of quantized reachability, therefore we set a $\gamma=0.1$ We observe that T2IM \fluxs and \sdxl are theoretically controllable (they reach the whole output space with maximal coverage). However, the models are not calibrated at all to the request, with correlations only reaching $\rho, R<0.1$. In the (right) plot we show the reachable sets for each input (dark bars) and the controllable set (shaded area). Interestingly, the models show a stark difference in terms of controllability in this setting.} 
    \label{fig:saturation}
\end{figure}

\begin{figure}[H]
    \centering
    \includegraphics[width=\linewidth]{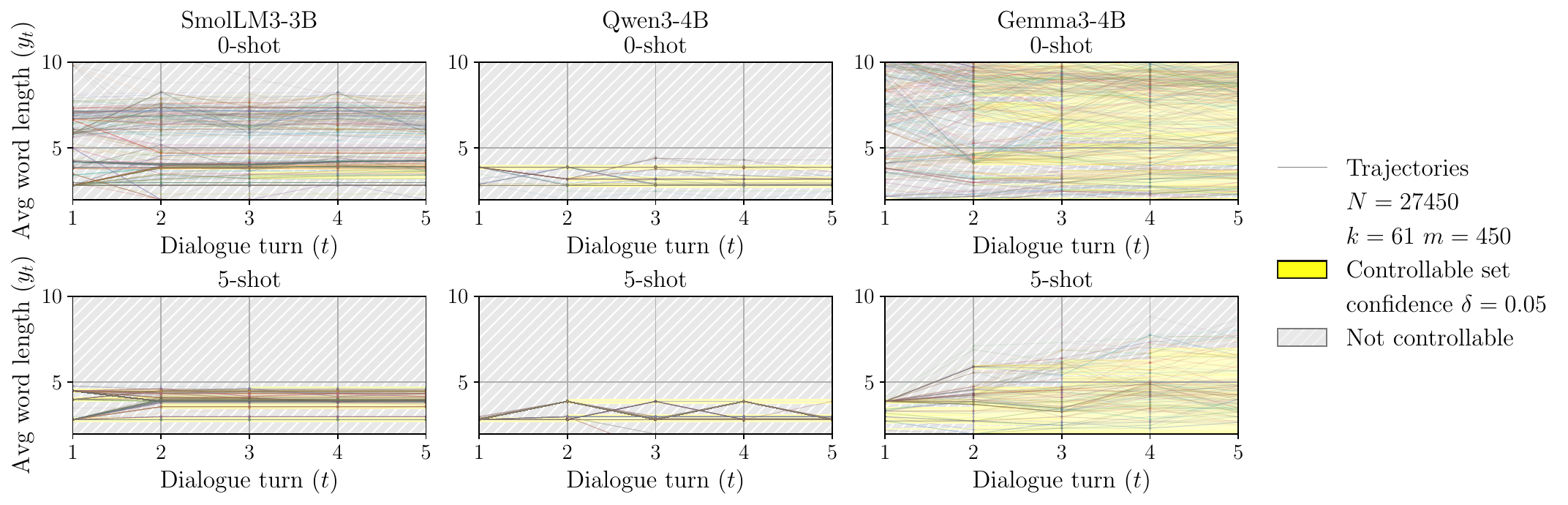} \\
    \noindent\rule{\linewidth}{0.2pt}
    \includegraphics[width=\linewidth]{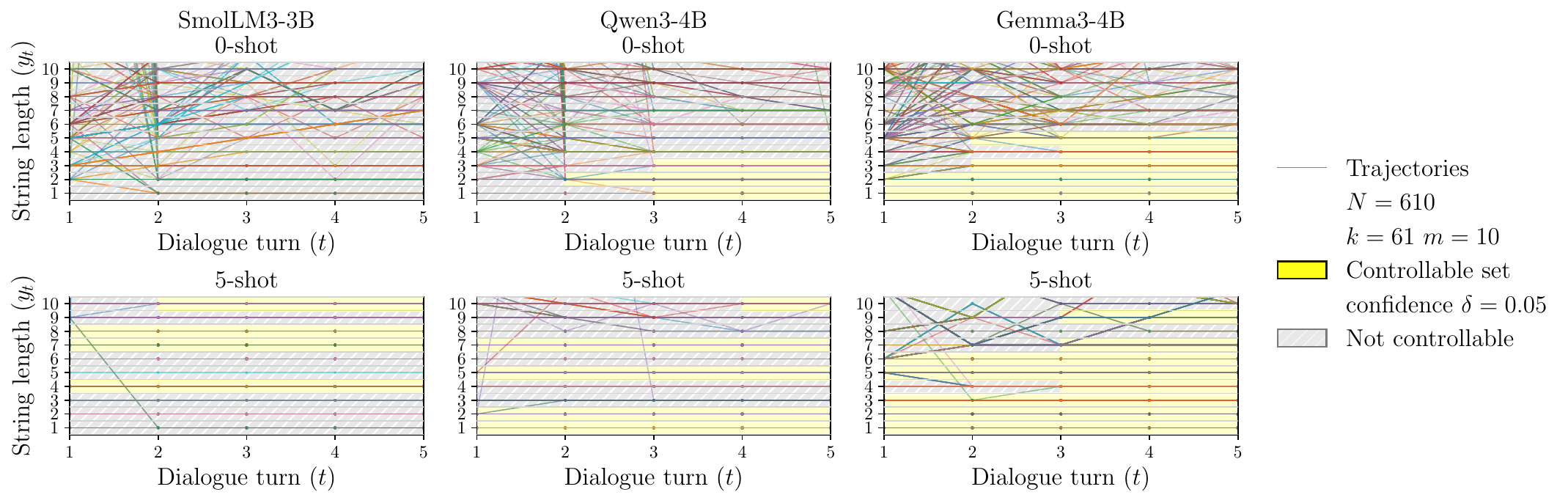} \\
    \noindent\rule{\linewidth}{0.2pt}
    \includegraphics[width=\linewidth]{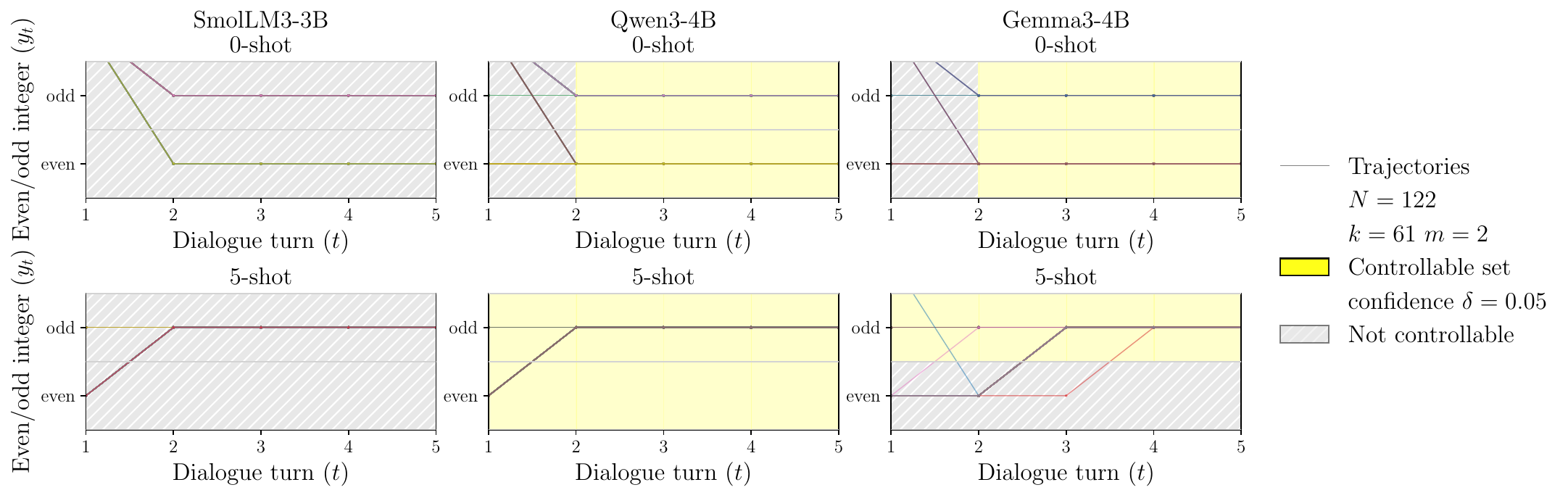}
    \caption{\textbf{Extended controllability trajectories for deterministic LLMs.} To supplement \cref{fig:controllability_summary} (top, middle), we show here the DP trajectories for the remaining LLM tasks, in the deterministic case.}
    \label{fig:extended_deterministic_llm_results}
\end{figure}

\FloatBarrier 
\subsection{Results: Stochastic systems}
\label{app:stochastic_results}
Our experiments primarily focused on DPs with deterministic, greedily decoded generative models. This is because the tasks, being goal-oriented, \eg produce an even/odd number, are more amenable to greedy sampling. However, as generative models may be sampled in practice, for stochastic LLMs (5-shot) we report DP controllability summaries in \cref{fig:stochastic}. 

\begin{figure}
    \centering
    (a) Formality task, stochastic system. \\
    \includegraphics[width=\linewidth]{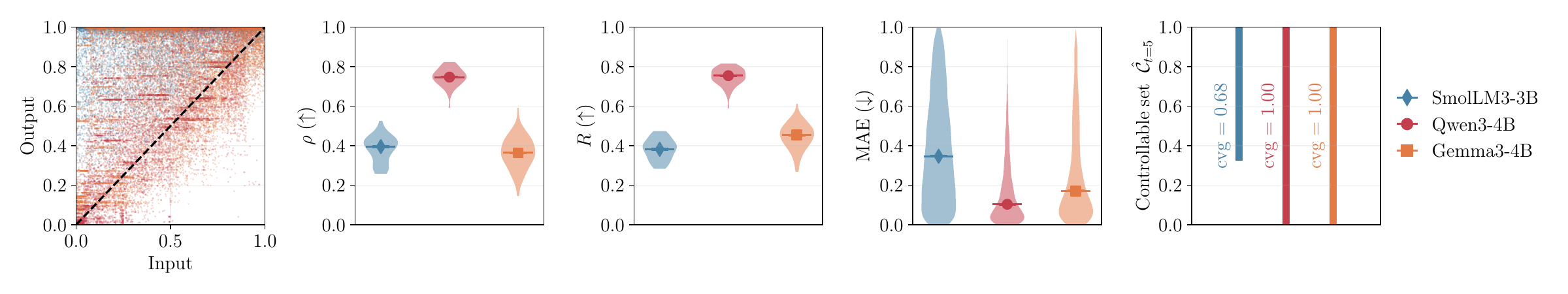}
    (b) Average word length, stochastic system \\
    \includegraphics[width=\linewidth]{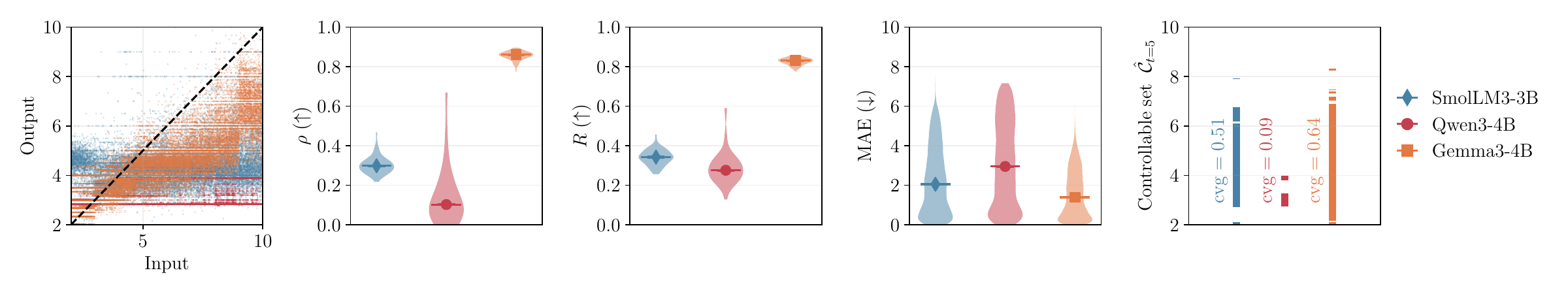} 
    (c) String length, stochastic system \\
    \includegraphics[width=\linewidth]{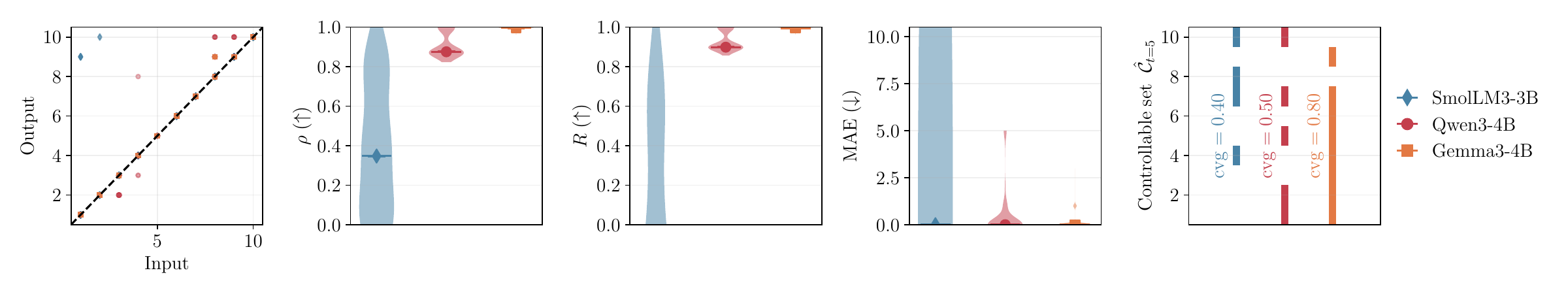}
    (d) Even/odd integer, stochastic system \\
    \includegraphics[width=\linewidth]{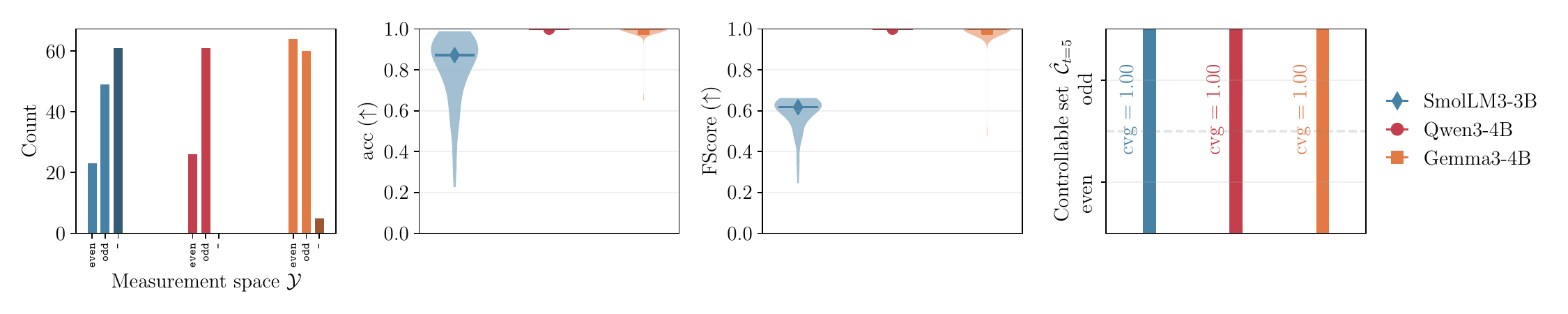}
    \caption{\textbf{Controllability results, stochastic LLMs.} For the formality task, average word length, string length, and even/odd number (top to bottom), we show the controllability summaries for stochastic systems. The sample complexities are the same as for the deterministic system, see \cref{mythm:controllability}. Adding stochasticity due to sampling results in more diffuse distributions of the output (left), notably for the even/odd integer task, as well as worse faithfulness metrics, as expected.}
    \label{fig:stochastic}
\end{figure}

\FloatBarrier
\subsection{Example qualitative analysis: Qwen, formality}
For text formality, we conduct a qualitative analysis of \qwen's generations (the best performing model) for three randomly sampled trajectories for the 5-shot, 5 turn setting. The resulting dialogues are in Tables A.1-3. In general, \qwen takes the strategy of choosing the example sentence (of the 5 shots) whose formality is closest to the requested formality. It remains on the theme of the example sentence, modifying one or two words until the formality is as desired. For instance, in requesting 25.65\% formality, \qwen's initial response ``wow i'm so tired of this weather" which tracks the example ``Wow I'm so tired" (25\% formality). Subsequent responses then continue the theme of being tired of the weather. Similarly, in the 76.19\% case, \qwen copies the general sentence structure of the 75\% formal example. For the request to generate a text of 96.31\% formality, however, \qwen generates texts that are too formal. It does however still follow the strategy of mimicking the style of the 100\% formal example, then modifying a few words at a time.

\begin{table}[h!]
\centering
\begin{tabular}{p{2.5cm}|p{12cm}}
Role & Dialogue \\
\hline
\textbf{User} &
\parbox[t]{12cm}{
You are a chatbot that follows instructions. I will give you an example:\\[4pt]
Instruction: Generate a story that is 100\% formal.\\
Result: Kind sir, please make your way to the terrace.\\[4pt]
Instruction: Generate a story that is 75\% formal.\\
Result: She bought the book, then she went to the beach.\\[4pt]
Instruction: Generate a story that is 50\% formal.\\
Result: Hey! Just went to the store.\\[4pt]
Instruction: Generate a story that is 25\% formal.\\
Result: Wow I'm so tired.\\[4pt]
Instruction: Generate a story that is 0\% formal.\\
Result: hey man how's it goin\\[4pt]
Knowing this, please follow the next instruction.\\
Generate a story that is 25.65\% formal. Write the story and only the story.
}
\\ \hline

\textbf{Assistant} &
\parbox[t]{12cm}{
wow i'm so tired of this weather
}
\\ \hline

\textbf{User} &
\parbox[t]{12cm}{
Your answer was too informal. I asked for a formality level of 25.65, and you gave a sentence with formality 12.27. Please try again. Write the story and only the story.
}
\\ \hline

\textbf{Assistant} &
\parbox[t]{12cm}{
i'm really fed up with the rain today
}
\\ \hline
 &
\parbox[t]{12cm}{
[Conversation ended, final formality 25.24\%]
}
\\ \hline
\end{tabular}
\label{tab:qualitative1}
\caption{\textbf{Requesting formality of 25.65\% to \qwen}. }
\end{table}

\begin{table}[h!]
\centering
\begin{tabular}{p{2.5cm}|p{12cm}}
Role & Dialogue \\ 
\hline
\textbf{User} &
\parbox[t]{12cm}{
You are a chatbot that follows instructions. I will give you an example:\\[4pt]
Instruction: Generate a story that is 100\% formal.\\
Result: Kind sir, please make your way to the terrace.\\[4pt]
Instruction: Generate a story that is 75\% formal.\\
Result: She bought the book, then she went to the beach.\\[4pt]
Instruction: Generate a story that is 50\% formal.\\
Result: Hey! Just went to the store.\\[4pt]
Instruction: Generate a story that is 25\% formal.\\
Result: Wow I'm so tired.\\[4pt]
Instruction: Generate a story that is 0\% formal.\\
Result: hey man how's it goin\\[4pt]
Knowing this, please follow the next instruction.\\
Generate a story that is 76.19\% formal. Write the story and only the story.
}
\\ \hline

\textbf{\qwen} &
\parbox[t]{12cm}{
She arrived at the station, checked her ticket, and boarded the train.
}
\\ \hline

\textbf{User} &
\parbox[t]{12cm}{
Your answer was too informal. I asked for a formality level of 76.19\%, and you gave a sentence with formality 64.04\%. Please try again. Write the story and only the story.
}
\\ \hline

\textbf{\qwen} &
\parbox[t]{12cm}{
She arrived at the station, presented her ticket, and took a seat on the train.
}
\\ \hline

 &
\parbox[t]{12cm}{[Conversation ended, final formality 76.31\%].}
\\
\hline 
\end{tabular}
\label{tab:qualitative2}
\caption{\textbf{Requesting formality of 76.19\% to \qwen}. }
\end{table}

\begin{table}[h!]
\centering
\begin{tabular}{p{2.5cm}|p{12cm}}
Role & Dialogue \\
\hline
\textbf{User} &
\parbox[t]{12cm}{
You are a chatbot that follows instructions. I will give you an example:\\[4pt]
Instruction: Generate a story that is 100\% formal.\\
Result: Kind sir, please make your way to the terrace.\\[4pt]
Instruction: Generate a story that is 75\% formal.\\
Result: She bought the book, then she went to the beach.\\[4pt]
Instruction: Generate a story that is 50\% formal.\\
Result: Hey! Just went to the store.\\[4pt]
Instruction: Generate a story that is 25\% formal.\\
Result: Wow I'm so tired.\\[4pt]
Instruction: Generate a story that is 0\% formal.\\
Result: hey man how's it goin\\[4pt]
Knowing this, please follow the next instruction.\\
Generate a story that is 96.31\% formal. Write the story and only the story.
}
\\ \hline

\textbf{Assistant} &
\parbox[t]{12cm}{
Kindly undertake the task of composing a narrative that adheres strictly to the conventions of formal language. The story should be presented in a manner that reflects the highest degree of formality, with no deviation from the expected tone and structure. The narrative should be self-contained, devoid of any additional commentary or explanation, and should solely consist of the story as requested.
}
\\ \hline

\textbf{User} &
\parbox[t]{12cm}{
Your answer was too formal. I asked for a formality level of 96.31, and you gave a sentence with formality 99.72. Please try again. Write the story and only the story.
}
\\ \hline

\textbf{Assistant} &
\parbox[t]{12cm}{
Kindly proceed to the designated location at once.
}
\\ \hline

\textbf{User} &
\parbox[t]{12cm}{
Your answer was too formal. I asked for a formality level of 96.31, and you gave a sentence with formality 99.64. Please try again. Write the story and only the story.
}
\\ \hline

\textbf{Assistant} &
\parbox[t]{12cm}{
Kindly proceed to the designated location immediately.
}
\\ \hline

\textbf{User} &
\parbox[t]{12cm}{
Your answer was too formal. I asked for a formality level of 96.31, and you gave a sentence with formality 99.62. Please try again. Write the story and only the story.
}
\\ \hline

\textbf{Assistant} &
\parbox[t]{12cm}{
Kindly proceed to the designated location without delay.
}
\\ \hline

\textbf{User} &
\parbox[t]{12cm}{
Your answer was too formal. I asked for a formality level of 96.31, and you gave a sentence with formality 99.64. Please try again. Write the story and only the story.
}
\\ \hline

\textbf{Assistant} &
\parbox[t]{12cm}{
Kindly proceed to the designated location at once.
}
\\ \hline

\end{tabular}
\label{tab:qualitative3}
\caption{\textbf{Requesting formality of 96.31\% to \qwen}. }
\end{table}
\newpage 

\section{Formal Definition of Control System}
\label{app:definitions}

\begin{definition} \label{def:control_system}
        A discrete-time \emph{control system} is a tuple $(\phi, \mathcal T, \mathcal U, \mathcal X, \mathcal Y, h)$, where
    \begin{enumerate}[noitemsep]
        \item $\mathcal T=\mathbb N$ is the time domain
        \item $\mathcal X$ is the state space,
        \item $\mathcal U$ is the input space,
        \item $\phi: \mathcal X \times \mathcal U \to \mathcal X; x_t, u_t \mapsto x_{t+1}$ is the transition function mapping the current state and input to the next state,
        \item $h: \mathcal X \times \mathcal U \to \mathcal Y; x_t, u_t \mapsto y_t$ is the readout map, taking the current state and inputs to the current output or measurement value.
    \end{enumerate}
\end{definition}

\section{Reachability and Controllability for deterministic vs.~stochastic systems}
\label{sec:stochastic}
\cite{dietrich2025datadrivenreachabilityscenariooptimization} explain that PAC-style guarantees for reachable sets fold in two sources of stochasticity: that of system dynamics and that from sampling uncertainty. Similarly, in our case, the \emph{interpretation} of $\alpha$, $p$-approximate controllability differs under deterministic and stochastic dynamics, as explained in \cref{sec:reachability_alg} of the main text. In particular, $\alpha$, $p$-approximate controllability is \emph{exact} (or within-$\gamma$) under deterministic dynamics and a deterministic controller, \ie given a desired $y^*\in \mathcal C_t^{\alpha}$, for over $1-\alpha$ proportion of initial states, we are \emph{probabilistically guaranteed} to find a $u \in \mathcal U$ that takes us to \emph{exactly} $y^*$ (within $\gamma$). Instead, the difference for \emph{stochastic} dynamics is that the same $\alpha$, $p$-approximate controllability now pertains to the \emph{distribution} of $y^*$: exact controllability no longer applies due to stochasticity in the dynamics (and by extension, of the readout $h$). In short, for a stochastic system, the returned $\hat{\mathcal R}_{t,p}$ by \cref{alg:mc_reachability} returns a \emph{best- or worst-case} reachable set estimate; the returned $\hat{\mathcal C}_t$ by \cref{alg:mc_controllability} returns a controllable set estimate that inherits the same \emph{best- or worst-case} distributional interpretation. 

One question we can ask about a stochastic system with continuous-valued outputs is the reachability/controllability of its \emph{expected} measurement-value, that is, $\mathbb E[y]$. We prove in \cref{app:discrete_bottleneck} that $\mathbb E[y]$ does not suffer from the discrete bottleneck, that is, we can directly repurpose existing methods \citep{devonport_data-driven_2019} returning continuous set estimates, rather than finding a relaxation for countable sets. We provide such a bound (\cref{mycoro:expected}) on the reachable set of $\mathbb E[y]$ in \cref{app:expected_output}, as a corollary to the main PAC bound in \citet{devonport_data-driven_2019}. The latter is stated in \cref{app:da19_output} for reference. Then, for controllability, one can simply replace the reachable set estimation subroutine in \cref{alg:mc_controllability} with the procedure described in \cref{app:expected_output}.

\section{Theorem 1 of \citet{devonport_data-driven_2019}}
\label{app:da19_output}

In this section, we compare our method to the Monte Carlo method of \cite{devonport_data-driven_2019} (DA19). Their method takcles the same problem of reachable set estimation, though it is for \emph{states} and not outputs/measurement-values (our case). Our algorithms and bounds were inspired by DA19, and we propose a corollary to their Theorem 1 in \cref{app:expected_output} for expected output reachability. For this reason, we state their main result and compare it to our \cref{mythm:abstract} below. 

\textbf{Monte Carlo method of \citet{devonport_data-driven_2019}} \citet{devonport_data-driven_2019} (hereon DA19) propose an elegant Monte Carlo sampling approach to estimate an \emph{overapproximating} interval of the reachable set. While their original result focuses on state reachability and for continuous-time systems, here we directly adapt their approach to \emph{output} reachability and for discrete-time systems. 

DA19's sampling algorithm states that to estimate the forward reachable set at time $t$:
\begin{enumerate}[noitemsep]
    \item Take $m$ i.i.d.~samples of the initial state $\{x_0^{(i)}\}_{i=1}^m \sim p_0$ and of the input $\{u^{(i)}\}_{i=1}^m \sim p_U$.
    \item Evaluate outputs $y_t^{(i)} = y(x_t^{(i)}, u_t^{(i)})$ at time $t$. 
    \item Take $\hat {\mathcal R}^{(m)}$ as the smallest axis-aligned interval containing all $y_t^{(i)}$.
\end{enumerate}

The forward reachable set at time $t$ can then be defined as an \emph{event} $\omega \in \mathcal F_t$ in the probability space $(\mathbb R^n, \mathcal F_t, p_{y,t})$. Noting that $p(\omega) = p(x_t \in \omega)$, $\omega \in \mathcal F_t$, then the true reachable set $\mathcal R_{t}$ is the smallest $\omega \in \mathcal F_t$ with probability $1$. 

The sampling procedure can only produce \emph{approximations} of the reachable set. To define this approximation, DA19 introduce the notion of $\epsilon$-accurate reachable set: 

\begin{definition}[$\epsilon$-accurate forward reachable set, \citet{devonport_data-driven_2019}]
\label{def:eps_accurate}
    The $\epsilon$-accurate reachable sets $\mathcal R_{t, \epsilon}$ at time $t \in \mathcal T$ are the smallest events $\omega \in \mathcal F_t$ with probability $1-\epsilon$.
\end{definition}

In words, the $\epsilon$-accurate reachable set at time $t$ approximates the true one up to a probability density of $\epsilon$. This probability density is given by the measure $p_{y,t}$ over outputs $y_t$. 

This means that the $\epsilon$-accurate reachable set $\mathcal R_{t, \epsilon}$ is such that $p_t(x_t \in \mathcal R_{t, \epsilon}) = 1-\epsilon$. DA19 further define an \emph{overapproximation} of an $\epsilon$-accurate reachable set as any set $R \subset \mathbb R^n$ such that $R$ contains an $\epsilon$-accurate reachable set. This means that $p_t(x_t \in R) \geq p_t(x_t \in \mathcal R_{t, \epsilon}) = 1-\epsilon$. Crucially, with enough samples, we can construct a set that \emph{overapproximates} or \emph{contains} the $\epsilon$-accurate reachable set. This is the key insight of DA19's main result, which holds if $p_{t,y}$ is \emph{continuous} (see \cref{app:da19_output} for details):

\begin{theorem}[Output forward reachability, adapted \citet{devonport_data-driven_2019}]
\label{thm:da}
    Let $(\epsilon, \delta) \in (0,1)$ and $\mathcal Y_t = \mathbb R^n$. If
    \begin{equation}
        m \geq \frac{2n}{\epsilon} \log \frac{2n}{\delta},
    \end{equation}
    then $\hat{\mathcal R}^{(m)}$ overapproximates an $\epsilon$-accurate output reachable set with confidence $\delta$, \ie $\mathbb P(\mathcal R_{t, \epsilon} \subset \hat {\mathcal R}^{(m)}) \geq 1 - \delta$. 
\end{theorem}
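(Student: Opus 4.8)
The plan is to reduce the statement to an elementary one‑dimensional coverage estimate applied coordinatewise and then glued together by a union bound; this is exactly why the bound carries a factor $2n$ and a logarithm of $2n/\delta$. The starting observation is structural: the estimate $\hat{\mathcal R}^{(m)}$ is the axis‑aligned box $\prod_{j=1}^{n}[\ell_j,u_j]$ with $\ell_j=\min_{1\le i\le m} y_{t,j}^{(i)}$ and $u_j=\max_{1\le i\le m} y_{t,j}^{(i)}$, so its complement in $\mathbb R^n$ is the union of just $2n$ half‑spaces, $\{y:y_j<\ell_j\}$ and $\{y:y_j>u_j\}$ for $j=1,\dots,n$. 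By definition of an $\epsilon$‑accurate reachable set, $\hat{\mathcal R}^{(m)}$ overapproximates one precisely when $p_{y,t}(\hat{\mathcal R}^{(m)})\ge 1-\epsilon$, i.e. when the complement carries $p_{y,t}$‑mass at most $\epsilon$; and by the union bound, if the complement has mass exceeding $\epsilon$ then at least one of those $2n$ half‑spaces has mass exceeding $\epsilon/(2n)$. So it is enough to bound the probability of this ``bad event'' by $\delta$.

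Next I would control one tail at a time, say the upper tail in coordinate $j$. Using continuity of $p_{y,t}$ (the only structural hypothesis, and precisely the one that fails under the discrete bottleneck), choose a threshold $q_j$ with $p_{y,t}(\{y:y_j>q_j\})=\epsilon/(2n)$; such a $q_j$ exists because $s\mapsto p_{y,t}(\{y:y_j>s\})$ is continuous, non‑increasing, and ranges over $[0,1]$. Since this map is non‑increasing, the event $p_{y,t}(\{y:y_j>u_j\})>\epsilon/(2n)$ forces $u_j<q_j$, which says that every one of the $m$ i.i.d.\ samples has $j$‑th coordinate strictly below $q_j$. As $\mathbb P(y_{t,j}^{(i)}<q_j)\le 1-\epsilon/(2n)$ for each $i$, independence gives that this happens with probability at most $(1-\epsilon/(2n))^m$, and the symmetric bound holds for each lower tail.

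Finally I would assemble the pieces. Union‑bounding over the $2n$ tails, the bad event has probability at most $2n(1-\epsilon/(2n))^m\le 2n\,e^{-m\epsilon/(2n)}$, and
\[
  2n\,e^{-m\epsilon/(2n)}\le\delta
  \quad\Longleftrightarrow\quad
  m\ \ge\ \frac{2n}{\epsilon}\log\frac{2n}{\delta},
\]
which is the claimed sample size. On the complementary event, of probability at least $1-\delta$, we have $p_{y,t}(\hat{\mathcal R}^{(m)})\ge 1-\epsilon$, so $\hat{\mathcal R}^{(m)}$ contains an $\epsilon$‑accurate output reachable set, i.e. $\mathcal R_{t,\epsilon}\subset\hat{\mathcal R}^{(m)}$.

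I expect the remaining subtleties to be bookkeeping rather than conceptual. First, the passage from DA19's original statement to ours is essentially free: the whole argument touches only the law $p_{y,t}$ of the time‑$t$ output (the pushforward of $p_0$ and the input distribution through the dynamics and the readout $h$) and never the dynamics themselves, so moving from continuous to discrete time and from state to output reachability changes nothing. Second, care is needed at atoms: continuity of $p_{y,t}$ is what guarantees both the existence of the exact‑mass threshold $q_j$ and the identification of ``$p_{y,t}(\hat{\mathcal R}^{(m)})\ge 1-\epsilon$'' with ``$\mathcal R_{t,\epsilon}\subset\hat{\mathcal R}^{(m)}$'' via minimality of the $\epsilon$‑accurate reachable set --- this is the main obstacle, and it is also the reason the discrete‑bottleneck case of \cref{mythm:abstract} needs a different (counting) argument rather than this one.
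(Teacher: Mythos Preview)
Your proof is correct and follows essentially the same route as DA19's original argument (which the paper adopts without re-proving): decompose the complement of the axis-aligned box into $2n$ halfspaces, fix the quantile halfspaces $\mathcal P_i$ of mass exactly $\epsilon/(2n)$ (your thresholds $q_j$) via continuity of $p_{y,t}$, observe that a ``bad'' tail forces all $m$ samples to miss the corresponding $\mathcal P_i$, and finish with the union bound and $1-x\le e^{-x}$. Your commentary on continuity and on why the state-to-output, continuous-to-discrete-time adaptation is free also matches the paper's discussion.
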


Note that $\hat {\mathcal R}^{(m)}$, the $\epsilon$-accurate output reachable set, is guaranteed to \emph{contain} the true reachable set with probability $1-\delta$. However, it may also contain outputs that are unreachable. Hence, given a target set $\mathcal Y^*$, it is \emph{necessary} but not \emph{sufficient} for $\phi$ to be controllable on $\mathcal Y^*$ that $\mathcal Y^* \subset \hat{\mathcal R}^{(m)}$. 

\subsection{Comparison to our \cref{mythm:abstract}}
Here are where DA19 and our method differ:
\begin{enumerate}
    \item They consider state space reachability, while we consider output reachability.
    \item They assume a \emph{continuous} reachable set, while in our setting it is \emph{countable}. In particular, continuity of $p_t$ is implied in their proof of Theorem 1 which constructs the bound, whereas in our setting it does not apply.
    \item DA19 returns an overapproximating interval (a bounding box) of a probabilistically approximate reachable set, while we return a set that is guaranteed within-$\gamma$ to hold a probabilistically approximate reachable set.
\end{enumerate}
There are several notable points where DA19 and our \cref{mythm:abstract} coincide:
\begin{enumerate}
    \item The probabilistic scaffolding in \cref{sec:setup} of the control process is the same, in order to i.i.d. sample $y_t$.
    \item The sampling algorithm (sampling i.i.d. trajectories), except for the construction of the returned set.
\end{enumerate}

\FloatBarrier
\section{Discrete bottleneck in dialogue processes}
\label{app:discrete_bottleneck}

Here, we discuss why, and under what conditions, generation with LLMs and T2IMs has a discrete bottleneck. That is, even when the readout map is \emph{continuous-valued}, the reachable set could be countable (if the readout map is categorical, we are already in a discrete regime). 

For the analysis in this section, we assume a deterministic readout map whose image is continuous-valued. We consider several cases: if the model is stochastic vs.~deterministic, and whether the model generates text (discrete) or images (continuous). In order to analyze the reachable set of an LLM or T2IM's generation, we need to consider each \emph{forward pass}, which is a more granular level than each dialogue turn (our temporal resolution of interest in the main text). We denote $\tau$ as the timestep indexing number of token generations. To make the results as general as possible, \ie encompassing activation steering which acts at each forward pass, we write all results assuming inputs may be introduced during each forward pass $\tau$.

\subsection{Conditions for discrete bottleneck, deterministic system}
We start by showing that deterministic LLMs, which produce a distribution over their finite discrete vocabulary space $\Sigma$, will have countable reachable sets. 

\begin{lemma}[Deterministic intervened LLMs' reachable sets are countable]
\label{lemma:deterministic_llm}
    The output attributes of an intervened LLM $\phi$ can be controlled, after $\tau$ token generations, to a set of cardinality at most $\min(|\Sigma|^\tau, |\mathcal U|^\tau)$, where $|\Sigma|$ is the vocabulary size. 
\end{lemma}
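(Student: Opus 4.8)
The plan is to argue by a direct counting argument over generation trees. Fix an initial state $x_0$ and consider the intervened model $\phi$ generating token-by-token over $\tau$ forward passes. At each forward pass $\tau'\le\tau$, the model (being deterministic once its inputs are fixed) produces a next token that is a deterministic function of the current context \emph{and} the control input introduced at that pass. First I would make precise that the only two sources of branching in the generation tree are (i) the token emitted at each step, which lives in the finite vocabulary $\Sigma$, and (ii) the control input applied at each step, which lives in $\mathcal U$. Since the model is deterministic, once the sequence of control inputs $(u_0,\dots,u_{\tau-1})$ is fixed, the entire token sequence — and hence the final generation $x_\tau$ — is uniquely determined. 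Therefore the set of reachable generations after $\tau$ passes is the image of the map $(u_0,\dots,u_{\tau-1})\mapsto x_\tau$, whose domain has cardinality at most $|\mathcal U|^\tau$.

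The second half of the bound comes from the complementary observation that, regardless of how many distinct control sequences there are, the generated string itself is a word of length $\tau$ over $\Sigma$ (or length $\le\tau$, which only helps), so there are at most $|\Sigma|^\tau$ distinct generations possible \emph{in total}. Hence the reachable set of generations injects into $\Sigma^{\le\tau}$, giving cardinality at most $|\Sigma|^\tau$. Combining the two bounds, the number of reachable generations is at most $\min(|\Sigma|^\tau,|\mathcal U|^\tau)$. Finally, since the readout map $h$ is a (deterministic) function, the reachable set of \emph{output attributes} $\{h(x_\tau)\}$ is the image of the reachable set of generations under $h$, so its cardinality is no larger; this transfers the bound from generations to measurement values and in particular shows the set is finite, hence countable. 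I would also remark that if inputs are only introduced once per dialogue turn rather than once per forward pass, the $|\mathcal U|$-term weakens to $|\mathcal U|^{(\text{number of turns})}$, but the $|\Sigma|^\tau$-term is unaffected, so the stated bound still holds as an upper bound.

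The main subtlety — not an obstacle so much as a point requiring care — is bookkeeping around what ``deterministic'' means for $\phi$: the footnote in the main text already stipulates that $\phi$ abstracts away the sampling strategy and that $x_{t+1}$ reflects the realized generation, so under greedy/deterministic decoding the next-token map is genuinely a function of (context, control input). I would state this as the standing assumption of the lemma and not belabor it. A second minor point is handling variable-length generations and end-of-sequence behavior: allowing the model to stop early only shrinks the tree, so padding the argument to exactly $\tau$ steps is harmless and I would dispatch it in one sentence. No limiting or measure-theoretic argument is needed here — the whole content is that finite branching factor to a finite depth yields a finite (hence countable) reachable set, with the two natural branching factors $|\Sigma|$ and $|\mathcal U|$ giving the $\min$.
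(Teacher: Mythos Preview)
Your proposal is correct and takes essentially the same approach as the paper: both are direct cardinality arguments that bound the number of reachable generations by $\min(|\Sigma|^\tau,|\mathcal U|^\tau)$ (the paper bounds per step by $\min(|\Sigma|,|\mathcal U|)$ and multiplies, you count control sequences and token strings globally---these give the same number), then push the bound through the deterministic readout $h$. The paper closes with one additional sentence observing that the \emph{controllable} set, being an intersection of reachable sets over $x_0$, inherits the same bound; you may want to add that line to match the lemma's ``can be controlled to'' phrasing, but it is immediate from your reachable-set bound.
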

\begin{proof}
We use a cardinality argument. We consider the first token generation. Given any $x_0 \in \mathcal X$, fix a $u \in \mathcal U$. As $\phi$ is deterministic, we have $|\text{im}(\phi(x_0, u))| = 1$, which implies $|\text{im}(\phi(x_0, \cdot))| \leq |\mathcal U|$. Further, $|\text{im}(\phi(\cdot, \cdot))| \leq |\Sigma|$, the vocabulary size. Hence $|\text{im} (\phi(\cdot, \cdot))| \leq \min (|\Sigma|, |\mathcal U|)$. Since the readout map $h$ is deterministic, $|\text{im} \ (h(\cdot,\cdot))| \leq \min(|\Sigma|, |\mathcal U|)$. We repeat this reasoning to integrate values of $2\cdots \tau$, so that $|\text{im} \ (h(\tau;x_0,u_{1\cdots \tau}))| \leq \min(|\Sigma|^\tau, |\mathcal U|^\tau)$ for all $x_0 \in \mathcal X$ and $u_i \in \mathcal U$, $i=1\cdots \tau$. We have bounded the size of the reachable set for a given $x_0 \in \mathcal X$. Then, we apply \cref{def:controllable_set} to see that the dialogue process with deterministic $\phi$ can be controlled to a set of maximum size $\min(|\Sigma|^\tau, |\mathcal U|^\tau)$, as the largest intersection between the reachable sets $\mathcal R(x_0^{(i)}, \mathcal U, \tau)$ for initial states $x_0^{(i)} \in \mathcal X$, is less than $\min(|\Sigma|^\tau, |\mathcal U|^\tau)$. 
\end{proof}

\begin{lemma}[Deterministic, intervened T2IMs' reachable sets have cardinality at most $|\Sigma^*|\times|\mathcal U|$]
\label{lemma:deterministic_t2im}
    The output attributes of a deterministic intervened T2IM $\phi$ can be controlled to a set of cardinality at most $|\Sigma^*|\times |\mathcal U|$, where $|\Sigma^*|$ is cardinality of all possible input strings. 
\end{lemma}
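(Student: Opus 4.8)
The plan is to reprise, essentially verbatim, the cardinality argument used for \cref{lemma:deterministic_llm}, but with one structural change: for an LLM the output is itself a token drawn from the finite vocabulary $\Sigma$, so countability could enter from the \emph{output side} (the $|\Sigma|^\tau$ term); a T2IM instead emits a full image, so there is no output-side bottleneck at all, and the only place countability can enter is the \emph{input side}. The target bound $|\Sigma^*|\times|\mathcal U|$ should then drop out of simply counting the distinct inputs a deterministic $\phi$ can receive: a textual context, which lies in the countable set $\Sigma^*$ of all finite strings over the vocabulary, together with any additional non-textual intervention $u\in\mathcal U$ (e.g.\ activation steering).

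Concretely, I would first fix an arbitrary initial state $x_0\in\mathcal X$ and argue that, since a text-to-image model never re-ingests the images it produces, everything $\phi$ actually conditions on at any turn collapses to (i) a single string in $\Sigma^*$ — namely the concatenation of $x_0$ with the subsequent textual feedback $u_0,\dots,u_{t-1}$ — and (ii) at most one element of $\mathcal U$ worth of extra control. For deterministic $\phi$, fixing such a pair $(s,u)\in\Sigma^*\times\mathcal U$ gives $|\mathrm{im}\,\phi(s,u)|=1$, hence $|\mathrm{im}\,\phi|\le|\Sigma^*|\cdot|\mathcal U|$; composing with the deterministic readout $h$ cannot increase cardinality, so $|\mathrm{im}\,(h\circ\phi)|\le|\Sigma^*|\cdot|\mathcal U|$. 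This bounds $|\mathcal R(x_0,\mathcal U,t)|$ uniformly over $x_0$, and since the controllable set is by \cref{def:controllable_set} an intersection $\bigcap_i\mathcal R(x_0^{(i)},\mathcal U,t)$ of such reachable sets, it too has cardinality at most $|\Sigma^*|\cdot|\mathcal U|$. I would note explicitly that, unlike in \cref{lemma:deterministic_llm}, no power of $t$ (nor of the per-forward-pass index $\tau$) appears: $\Sigma^*$ already ranges over strings of unbounded length, so it subsumes the effect of multiple dialogue turns, and per-pass interventions are absorbed into the definition of $\mathcal U$ as a tuple over passes, leaving the single factor $|\mathcal U|$.

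The main subtlety — and the step I would be most careful about — is the claim that generated images do not re-enter as inputs, so that the context genuinely reduces to an element of $\Sigma^*$; without this, the input ``alphabet'' would be continuum-sized and the counting argument would collapse. (Even if one insisted on tracking the images formally, an induction on $t$ recovers the same conclusion, since each $x_t$ is a deterministic function of textual inputs, but the bookkeeping is cleaner if stated up front.) A secondary point worth a sentence is that the stated bound is an upper bound and need not be tight — if the $u_i$ are themselves textual they are already counted inside $\Sigma^*$, so $|\Sigma^*|\cdot|\mathcal U|$ merely overcounts, which is harmless. Everything else is a direct transcription of the LLM proof.
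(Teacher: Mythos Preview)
Your approach is essentially the same as the paper's: both use the cardinality argument from \cref{lemma:deterministic_llm}, observing that for a deterministic T2IM the only bottleneck comes from the input side (the string context in $\Sigma^*$ and the intervention in $\mathcal U$), so $|\mathrm{im}\,(h\circ\phi)|\le|\Sigma^*|\cdot|\mathcal U|$. The paper's proof is a brief sketch that focuses on $t=1$, whereas you spell out the multi-turn case and the image-non-reentry subtlety explicitly; these are useful elaborations but not a different argument.
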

\begin{proof}
    (Sketch). We use a similar argument to \cref{lemma:deterministic_llm}. If the readout map $h$ (\cref{sec:setup}) is deterministic, then the reachable set at time $t=1$ is only bottlenecked by its initial prompt space $\mathcal X_0 = \Sigma^*$ and its input space $\mathcal U$. It follows that its reachable set's cardinality is maximum $|\Sigma^*|\times |\mathcal U|$, with equality if $h$ is injective.
\end{proof}

This means that for LLMs, there is a guaranteed \emph{discrete bottleneck} from the vocabulary space. For T2IMs, if the input space is continuous, \eg activation steering, then there is no discrete bottleneck. In our case, as we study discrete-valued inputs of finite length (prompting), there is necessarily a discrete bottleneck for T2IMs. 

\subsection{Conditions for discrete bottleneck, stochastic system}

Here, we consider under which conditions stochastic systems will have a discrete bottleneck. The discrete bottleneck is guaranteed for LLMs, whose input and output spaces are both strings-- however, stochastic T2IMs will have \emph{continuous} output sets.

\begin{lemma}[Stochastic intervened LLMs' reachable sets are countable]
\label{lemma:stochastic_llm}
    The set of output attributes of an intervened LLM $\phi$ after $\tau$ token generations, have a cardinality at most $|\Sigma|^\tau$, where $|\Sigma|$ is the vocabulary size. 
\end{lemma}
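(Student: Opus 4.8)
The plan is to mimic the cardinality argument of \cref{lemma:deterministic_llm}, but to keep careful track of which of the two bounds survives once determinism is dropped. The essential observation is architecture-level: an LLM $\phi$, stochastic or not, produces at each forward pass a probability distribution over its finite vocabulary $\Sigma$, and the \emph{realized} token is always an element of $\Sigma$. Hence after $\tau$ forward passes, every realized generation — over all initial states $x_0$, all input sequences $u_{1\cdots\tau}\in\mathcal U$, and all random draws — is a length-$\tau$ string in $\Sigma^\tau$ (adopting the convention that a generation terminating early is padded with a stop/pad symbol, or equivalently noting $\bigcup_{j\le\tau}\Sigma^j$ is still finite). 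Therefore the set of all generations the model can emit has cardinality at most $|\Sigma|^\tau$.

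First I would make precise the notion of reachable set for a stochastic system, using the probabilistic formulation of \cref{sec:coarse_grained}: the reachable set of generations is $\bigcup_{u}\mathrm{supp}(p_t(\cdot\mid x_0,u))$, i.e. the set of states with positive pushforward density. By the previous paragraph this union is contained in $\Sigma^\tau$, hence countable with cardinality $\le |\Sigma|^\tau$. Then, since throughout this appendix section the readout map $h$ is assumed deterministic, the set of reachable \emph{output attributes} is the image $h(\cdot)$ of a set of size at most $|\Sigma|^\tau$, and the image of a finite set under any map has no larger cardinality. This yields the claimed bound $|\Sigma|^\tau$. Applying \cref{def:controllable_set} / the intersection construction as in \cref{lemma:deterministic_llm} then transfers the bound to the controllable set, since an intersection of sets each of size $\le|\Sigma|^\tau$ is itself of size $\le|\Sigma|^\tau$.

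The one place I would be explicit about the contrast with \cref{lemma:deterministic_llm} is why we cannot also claim a $|\mathcal U|^\tau$ bound. Under determinism, a fixed input sequence maps to exactly one generation, so $|\mathrm{im}(\phi(x_0,\cdot))|\le|\mathcal U|$; under stochasticity this fails, because a single input induces an entire distribution over $\Sigma^\tau$ whose support may be as large as $|\Sigma|^\tau$. So the $\min(|\Sigma|^\tau,|\mathcal U|^\tau)$ of the deterministic lemma collapses to just $|\Sigma|^\tau$ here, which is exactly what the statement asserts.

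I do not expect any serious obstacle: the argument is a one-line cardinality bound on the string space followed by "deterministic readout does not increase cardinality." The only real care needed is bookkeeping — fixing the convention for variable-length / early-terminating generations so that "length $\tau$" is well defined, and stating the reachable set in terms of supports of the output densities rather than single-valued images (as in the deterministic case). Neither of these changes the final count.
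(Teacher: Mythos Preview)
Your proposal is correct and follows the same core approach as the paper: each forward pass yields a token from $\Sigma$, so after $\tau$ generations there are at most $|\Sigma|^\tau$ possible outputs. The paper's proof is in fact a two-sentence version of your argument, omitting the bookkeeping (supports, readout map, early termination, contrast with the deterministic case) that you spell out.
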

\begin{proof}
    The number of potential outputs during each forward pass is the number of vocabulary items $|\Sigma|$. After $\tau$ token generations, the maximum number of possible outputs is $|\Sigma|^\tau$.
\end{proof}

\begin{lemma}[Stochastic intervened T2IMs' reachable sets can be uncountable]
\label{lemma:stochastic_llm2}
    The set of output attributes of an intervened T2IM $\phi$ can be uncountable. 
\end{lemma}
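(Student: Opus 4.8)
The statement is an existence claim — it asserts that uncountable reachable sets are \emph{possible} for stochastic T2IMs — so it suffices to exhibit a single stochastic intervened T2IM $\phi$ together with an admissible (deterministic, continuous-valued) readout map $h$ for which the reachable set of \cref{def:reachability}, with ``$y_t = \tilde y$'' read as ``$\tilde y$ is attained by some realization of the internal sampling randomness,'' is uncountable. The plan is to isolate the source of uncountability: unlike LLMs, whose every forward pass samples from a finite vocabulary $\Sigma$, a diffusion-style T2IM draws a latent $Z$ from a continuous distribution (e.g.\ $Z \sim \mathcal N(0,\mathbf I)$) and maps it deterministically to an image. For a fixed prompt, letting $Z$ range over its support already produces an uncountable family of images, and a continuous-valued readout pushes this forward to an uncountable subset of $\mathcal Y$.

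Concretely, I would take the trivial stochastic T2IM that, on \emph{any} input $u \in \mathcal U$ and any $x_0$, samples $Z \sim \mathrm{Unif}[0,1]$ and returns the constant grayscale image all of whose pixels equal $Z$; take $h$ to be the mean pixel intensity, so $\mathcal Y = [0,1] \subset \mathbb R$ is continuous-valued. Then at $t=1$, for every $x_0$ and every $z \in [0,1]$, the value $z$ is attained by $y_1 = h(\phi(x_0; u))$ on the realization $Z=z$, so $\mathcal R(x_0,\mathcal U,1) \supseteq [0,1]$, which is uncountable. If a ``realistic'' witness is preferred over a toy one, the same argument applies verbatim to any diffusion T2IM whose latent-to-image map is non-constant along some direction of the latent support for a fixed prompt, composed with, e.g., a brightness or saturation readout; genericity of smooth networks makes the resulting range an interval.

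The only subtlety I would take care to spell out is the reading of \cref{def:reachability} in the stochastic regime: the reachable set collects all $\tilde y$ attainable by \emph{some} realization of the system's internal randomness (equivalently, the union over input sequences of the supports of $Y_t$), consistent with the best/worst-case interpretation in \cref{sec:stochastic}. I would also flag the implicit hypothesis that $h$ is continuous-valued — for a categorical readout the reachable set is trivially countable — so that this lemma, in contrast to \cref{lemma:deterministic_t2im} where the discrete prompt space $\Sigma^*$ bounds cardinality, is precisely the statement that the discrete bottleneck on the \emph{input} side is no longer inherited by the output side once the sampling noise is continuous. There is no real obstacle beyond stating these conventions cleanly; the mathematical core is a one-line cardinality observation.
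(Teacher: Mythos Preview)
Your argument is correct and rests on the same idea as the paper's: the discrete bottleneck for LLMs comes from the finite vocabulary $\Sigma$, and this disappears for T2IMs because their generation space (images) is continuous. The paper's proof is a one-line cardinality remark---literally ``replace $|\Sigma|$ in \cref{lemma:stochastic_llm} with the cardinality of a T2IM's generation space; images are continuous, so the reachable set is uncountable''---whereas you supply an explicit witness (a toy T2IM with $Z\sim\mathrm{Unif}[0,1]$, constant grayscale output, mean-pixel readout) and carefully spell out the stochastic reading of \cref{def:reachability} and the continuous-valued-$h$ hypothesis. Your version is more rigorous and self-contained; the paper's is terser but relies on the reader to fill in exactly the details you provide.
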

\begin{proof}
    The proof follows from replacing the cardinality of vocabulary space $|\Sigma|$ in \cref{lemma:stochastic_llm} with the cardinality of a T2IM's generation space. As T2IMs generate images, which are continuous, the reachable set of stochastic T2IMs is uncountable.
\end{proof}

\subsection{Expected outputs controllability}
Stochastic generative models define a distribution that is sampled at inference time. While the output attributes themselves are bottlenecked, their expectations may not be. The expectation of the attribute over the model's distribution is only bottlenecked by the input cardinality $|\mathcal U|^\tau$, where $\tau$ is the number of tokens generated in the case of an LLM ($\tau=1$ for a T2IM).

\begin{lemma}[Stochastic LLMs are expected output controllable on cardinality-$|\mathcal U|^\tau$ sets]
\label{lem:stochastic}
    The expected measurement value $\mathbb E[y]$ of stochastic LMs $\phi$, where the next token $s$ is sampled from $s \sim p_\Sigma$ with $p_\Sigma \in \Delta^{|\Sigma|-1}$,\footnote{Notation: $\Delta^{|\Sigma|-1}$ is called the ``$|\Sigma|-1$-simplex", or the space of categorical probability distributions over $|\Sigma|$ categories.} can be controlled to sets of cardinality at most $|\mathcal U|^\tau$ by token generation $\tau$.
\end{lemma}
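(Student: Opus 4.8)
The plan is to mirror the cardinality argument of \cref{lemma:deterministic_llm,lemma:stochastic_llm}, with the one new observation that, although each token $s$ is drawn randomly, the \emph{distribution} it is drawn from is a deterministic function of the preceding state, the tokens already sampled, and the interventions. Consequently the pushforward law of $y_\tau = h(x_\tau, u_\tau)$ under the model's sampling strategy is a fixed probability measure once the input sequence $u_{1\cdots\tau}\in\mathcal U^\tau$ and the initial state $x_0$ are pinned down, so $\mathbb E[y_\tau]$ is a single element of $\mathbb R^n$ determined by $(x_0, u_{1\cdots\tau})$ — finite because $\mathcal Y$ is assumed bounded.

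First I would fix $x_0\in\mathcal X$ and argue that the map $u_{1\cdots\tau}\mapsto \mathbb E[y_\tau\mid x_0, u_{1\cdots\tau}]$ is well-defined: at forward pass $\tau'\le\tau$, the next-token law $p_\Sigma(\cdot\mid x_0, s_{1\cdots\tau'-1}, u_{1\cdots\tau'})\in\Delta^{|\Sigma|-1}$ is specified by $\phi$; marginalizing over the finitely many token histories $s_{1\cdots\tau}\in\Sigma^\tau$ and applying the deterministic readout $h$ yields a fixed distribution over $y_\tau$, whose mean exists by boundedness of $\mathcal Y$. Hence $|\{\mathbb E[y_\tau\mid x_0, u_{1\cdots\tau}] : u_{1\cdots\tau}\in\mathcal U^\tau\}|\le |\mathcal U^\tau| = |\mathcal U|^\tau$; that is, the reachable set of expected outputs from $x_0$ has cardinality at most $|\mathcal U|^\tau$ (with equality iff this composition is injective in the input sequence).

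Next I would invoke \cref{def:controllable_set}: the controllable set is contained in the intersection $\bigcap_{x_0\in\mathcal X}\mathcal R(x_0,\mathcal U,\tau)$ of the per-initial-state reachable sets of expected outputs, so its cardinality is at most $\min_{x_0}|\mathcal R(x_0,\mathcal U,\tau)|\le |\mathcal U|^\tau$, which is the claim. (For a T2IM one specializes to $\tau=1$, recovering the input-only bottleneck noted after \cref{lemma:deterministic_t2im}.)

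The main obstacle is conceptual rather than technical: making rigorous that all of the model's sampling randomness is genuinely integrated out, so that no residual stochastic dependence inflates the count. One must be explicit that $\mathbb E[\cdot]$ is taken over the model's entire internal sampling process conditioned on $(x_0, u_{1\cdots\tau})$, with no external randomness left over; once this is stated carefully, what remains is the same finite cardinality count as in the deterministic case, with the vocabulary size $|\Sigma|$ replaced by the input-space size $|\mathcal U|$ because the vocabulary bottleneck vanishes after taking the expectation and only the input bottleneck survives.
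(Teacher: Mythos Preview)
Your proposal is correct and follows essentially the same cardinality argument as the paper. The paper phrases the key step as ``replace $|\Sigma|$ with $|\Delta^{|\Sigma|-1}|=|\mathbb R|$'' so that the per-step bottleneck becomes $\min(|\mathbb R|,|\mathcal U|)=|\mathcal U|$, whereas you state directly that $\mathbb E[y_\tau\mid x_0,u_{1\cdots\tau}]$ is a deterministic function of the input sequence; both observations say the same thing---taking expectations removes the vocabulary bottleneck and leaves only the input bottleneck---and both finish by invoking \cref{def:controllable_set}.
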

\begin{proof}
    We first consider stochastic LLMs. We use the same cardinality argument as in \cref{lemma:deterministic_llm}. The proof follows from replacing $\text{im} (\phi) \leq |\Sigma|$ with $\text{im} (\phi) \leq |\Delta^{|\Sigma|-1}| = |\mathbb R|$. This means that the expected output attribute $\mathbb E_{s \sim p_\Sigma} [y]$ can maximally vary over as many possible values as $p_\Sigma$, which is $|\Delta^{|\Sigma|-1}| = |\mathbb R|$. Then, given any $x_0 \in \mathcal X$, the cardinality of $\text{im} \ (h (x_0, \cdot))$ is less than $\min(|\mathbb R|, |\mathcal U|) = |\mathcal U|$, where the choice of input $u \in \mathcal U$ bottlenecks the expected output attribute. Analogous to the proof of \cref{lemma:deterministic_llm}, it follows that, given $x_0$, the reachable set's cardinality $|\mathcal R(x_0, \mathcal U, \tau)| \leq |\mathcal U|^\tau$, by integrating the system. Then, applying \cref{def:controllable_set}, the size of the controllable set is maximum $|\mathcal U^\tau|$. 
\end{proof}

\FloatBarrier
\section{Reachability: \Cref{mythm:abstract}}    
\subsection{\Cref{mythm:abstract}: continuous-valued version}
\label{app:proof_quantized}

We state the $\gamma$-quantized version (for continuous-valued measurements) of \cref{mythm:abstract}. The proof for the categorical version follows directly.

\begin{mytheorem}{Sample complexity bound, $\gamma$-quantized reachability.}{quantized} Let $\mathcal Y$ be continuous and bounded. Let $N$ be the covering number of $\mathcal Y$ with $\infty$-balls of radius $\gamma/2$. Let $m$ be the number of i.i.d. samples drawn from $Y_t$, \ie $\{y_i\}_{i=1}^m$ with each $y_i\in\mathcal Y\ \forall i$. Fix $\delta \in (0,1)$. If 
    \begin{equation}\label{eqn:cont_bound}
        m \geq \max\left (N, 
        \frac{\log (\delta / N)}{\log(1-p)}
        \right ),
    \end{equation}
    then $\mathbb P(\mathcal R^{\gamma}_{t,p} \subset \cup_{i=1}^m \mathcal B_\infty(y_i, \gamma)) \geq 1-\delta$, where $\mathcal B_\infty(y, \gamma)$ is the $l_\infty$ ball centered at $y$ with radius $\gamma$. 
\end{mytheorem}
\begin{proof}
    The problem reduces to asking how many independent samples $m$ are needed to hit all $N$ bins with likelihood $\geq 1-\delta$, where the probability of sampling each bin is $>p$. 
    
    Let $m$ i.i.d.~samples be drawn from $p_{y,t}$: $\hat {\mathcal Y}^{(m)} = \{y_i\}_{i=1}^m$ (omitting $t$ from the RHS for readability). By definition of $\mathcal R_{t,p}^\gamma$, the probability sample $i$ hits the bin $y_{\text{bin}}^j$ in $\mathcal R_{t,p}^\gamma$ is $p_{y,t}(y_i \in y_{\text{bin}}^{j}) \geq p$. The probability sample $i$ lands outside bin $j$ is then $p_{y,t}(y_i \notin y_{\text{bin}}^j) \leq 1-p$. The probability all samples $i=1\cdots m$ land outside nontrivial bin $j$ is $\mathbb P(\hat {\mathcal Y}^{(m)} \not \subset y_{\text{bin}}^j) \leq (1-p)^m$.

    Then, by the union bound over a maximum of $N$ bins, the probability at least one  bin is never hit is at most $N (1-p)^m$. 
    Subtracting from $1$, the probability that all bins in $\mathcal R_{t,p}^\gamma$ are hit by some sample is
    \begin{align}
    \label{eqn:gamma_reachable}
        \mathbb P(\text{all }y_{\text{bin}}^j\text{ hit}) 
        \equiv \mathbb P(\mathcal R_{t,p}^\gamma\text{ is }\gamma\text{-reached by the samples}) &
        \geq 1 - N (1-p)^m.
    \end{align}
    We want to guarantee that $\mathbb P(\mathcal R_{t,p}^\gamma\text{ is }\gamma\text{-reached by the samples}) \geq 1-\delta$. We do so by guaranteeing $1-N (1-p)^m \geq 1-\delta$. 
    Then, solving for $m$, if
    \begin{equation}
    m \geq \frac{\log (\delta / N)}{\log(1-p)},
    \end{equation}
    then with probability $\geq 1-\delta$, $\mathcal R_{t,p}^\gamma$ is $\gamma$-reached by the samples $\hat{\mathcal Y}^{(m)}$. By the pigeonhole principle, $m$ also needs to be greater than the number of bins $N$. It follows that if
    \begin{equation}
        m \geq \max \left (N, \frac{\log (\delta / N)}{\log(1-p)} 
        \right),
    \end{equation}
    then all $N$ bins, each with radius $\gamma/2$ (side-length $\gamma$), are hit with probability $\geq 1-\delta$. 

By \cref{eqn:gamma_reachable}, we are now guaranteed w.p.~$\geq 1-\delta$ that each point in $\mathcal R_{t,p}^\gamma$ is within $\gamma$ of some sampled point in $\hat{\mathcal Y}^{(m)}$. It follows that w.p.~$\geq 1-\delta$, the set $\mathcal R_{t,p}^\gamma$ is contained in a $\gamma$-cover of $\hat{\mathcal Y}^{(m)}$, or $\mathbb P(\mathcal R_{t,p}^\gamma \subset \cup_{i=1}^m \mathcal B_\infty(y_i, \gamma)) \geq 1-\delta$.

\end{proof}

\subsection{Corollary of \Cref{mythm:abstract}: categorical version}    
\begin{mycorollary}{Sample complexity bound, categorical reachability}{discrete}
     Let $\mathcal Y$ be discrete with finite cardinality $N$. Let $m$ be the number of i.i.d. samples drawn from $Y_t$, \ie $\{y_i\}_{i=1}^m$ with each $y_i\in\mathcal Y\ \forall i$. Fix $\delta \in (0,1)$. If 
    \begin{equation}\label{eqn:discrete_bound2}
        m \geq \max\left (N, 
        \frac{\log (\delta / N)}{\log(1-p)}
        \right ),
    \end{equation}
    then $\mathbb P(\mathcal R_{t,p} \subset \{y_i\}_{i=1}^m) \geq 1-\delta$. 
\end{mycorollary}
\begin{proof} Follows from proof of \cref{mythm:quantized}.
\end{proof}

\subsection{Sample complexity of \Cref{mythm:abstract}}    
\label{app:sample_growth}

\begin{figure}[H]
    \centering
\includegraphics[width=0.95\linewidth]{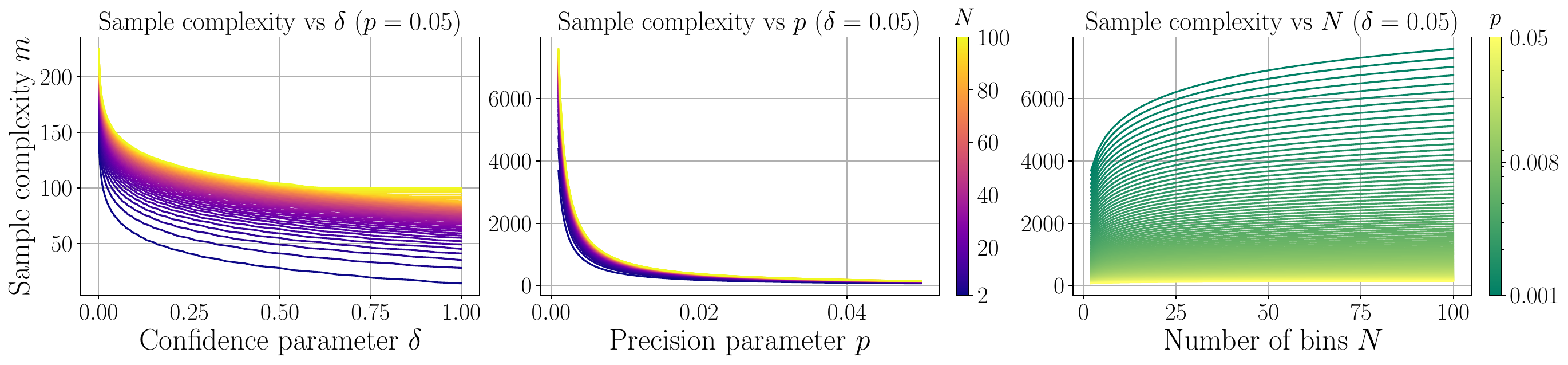}
    \caption{\textbf{Sample complexity growth with hyperparameters.} From left to right, we see sample complexity $m$ growth with respect to $\delta$ (constant $p$), $p$ (constant $\delta$), and $N$ (constant $\delta$). Overall, the most drastic change in sample complexity comes from decreasing $p$. Sample complexity also increases sharply as $\delta$ draws closer to $0$. In contrast, sample complexity is robust to increasing $N$ in a fine-grained regime ($N \gtrsim 50$). The sample complexity remains tractable for parameters $(\delta, p, N) = (0.05, 0.05, 100)$, giving $m=149$.}
    \label{fig:reachability_samples}
\end{figure}

\Cref{fig:reachability_samples} visualizes the growth of $m$ with respect to each parameter. The dependence of $m$ on $\delta$ and $p$ is given by the form of \cref{mythm:abstract}, and are seen as a sharp growth as $p$ and $\delta$ grow closer to $0$. The dependence on $p$ is more drastic (Figure middle) compared to $\delta$ (Figure left).

The dependence of $m$ on $N$ is less straightforward. We sketch why $m$ grows as $\mathcal O(N \log N)$. First, fix the number of bins $N$. Then, $p$ cannot exceed the probability of a bin under uniform density: $p \leq \max_{p_{t,y} \in \mathcal P_{t,y}} \min_j p_{t,y}(y_{\text{bin}}^j) = \frac{1}{N}$. We use the observation that $\log (1-p) \approx -p$, valid for small $p \in (0,1)$: 
\begin{align}
    \frac{\log (\delta/N )}{\log(1-p)} &\approx
    \frac{\log (N / \delta)}{p} \\
    &\geq N (\log N - \log \delta) \\
    &= \mathcal O(N\log N).
\end{align}

\FloatBarrier
\subsection{Empirical validation of \Cref{mythm:abstract}}
\label{app:empirical_controllability}

The sample complexity bound in \cref{mythm:abstract} is distribution-free and holds for any control system. Therefore, the tightness of the bound likely differs across experimental settings, and specifically for different distributions the underlying generative model induces. Therefore, instead of a formal analysis of tightness, which would be highly nontrivial in the absence of the DP's distributional information, in this section we perform a sanity check that probabilistic guarantee in \cref{mythm:abstract} holds in our setting of interest, \ie on a generative model. 

\subsection{Setup}
We test only the most general case, that is, for a stochastic system $\phi$ and $\gamma$-quantized reachability. In particular, we use \gemma on the text formality task, for 0-shot prompting and $T=1$. The \pink{initial state} is $x_0$=\texttt{BOS}. All other hyperparameters and prompt templates are replicated from \cref{tab:hyperparameters,tab:sampling_parameters} and the \blue{input distribution} and \purple{readout map} inherit the formality setting described in the main text.

For the DP defined for the above \pink{initial state}, \blue{input distribution}, and \purple{readout map}, we first proxy the true distribution $p_{y,t}$ by sampling and scoring $N=10000$ responses from \gemma. On this proxied true distribution, we compute the ``true" $(p,\gamma)$-approximate reachable set $\mathcal R_{t,p}^\gamma$ as described in \cref{def:pr}. 

To validate \cref{mythm:abstract}, we run \cref{alg:mc_reachability} $200$ times. Each time, we check whether the guarantee $\hat{\mathcal R}_t^{(m)}\supset \mathcal R_{t,p}^\gamma$ holds. Then, we average over the $200$ runs to obtain the empirical confidence. If the empirical confidence complies with the desired confidence parameter $\delta$, then it serves as a sanity check for \cref{mythm:abstract}.

\paragraph{Results} \Cref{fig:tightness_reachability} validates \cref{mythm:abstract} in practice, where we plot the empirical confidence against the sample complexity. Each point on the line represents the mean over $200$ runs. \Cref{mythm:abstract} recommends a minimum sample complexity of $m=104$, which empirically adheres to the guarantee that $\hat{\mathcal R}_t^{(m)}\supset \mathcal R_{t,p}^\gamma$ above the desired confidence $1-\delta=0.95$. For this particular setting, \cref{mythm:abstract} is reasonably tight, given the fewest samples to satisfy the guarantee is around $40$. This represents a roughly $2.5\times$ inefficiency in the bound.

\begin{figure}[h]
    \centering
    \includegraphics[width=0.9\linewidth]{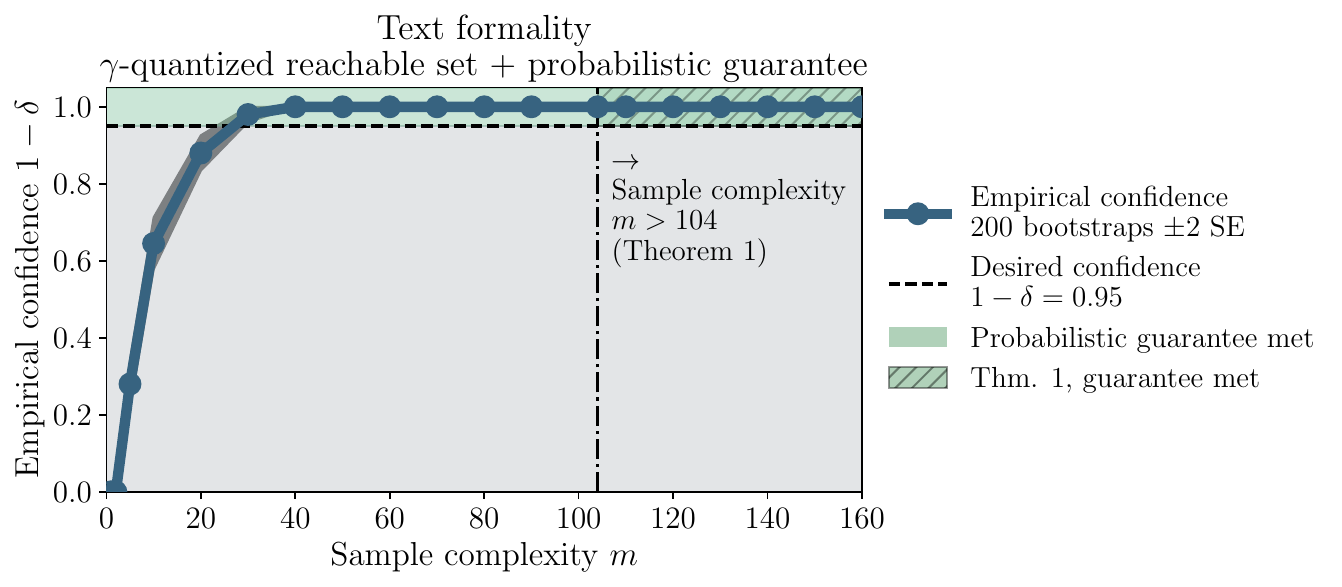}
    \caption{\textbf{Empirical validation of \Cref{mythm:abstract}.} }
    \label{fig:tightness_reachability}
\end{figure}

\FloatBarrier
\section{Expected output reachable set}
\label{app:expected_output}
\Cref{lem:stochastic} says that considering \emph{expected outputs} under stochastic dynamics no longer suffers from the discrete bottleneck. We present a corollary to DA19's Theorem, (stated in this paper's \cref{thm:da}) that estimates the reachable set of the \emph{expected} output value for stochastic dynamics. 

\begin{mycorollary}{Expected output forward reachability for stochastic system}{expected}
    Let $(\epsilon, \delta) \in (0,1)$ and let $(\epsilon_\mu, \delta_\mu) \in (0,1)$. If $m$ satisfies
    \begin{equation}
        (1-\delta_\mu)^m(1-2n(1-\frac{\epsilon}{2n})^m) \geq 1-\delta,
    \end{equation}
then $\hat{\mathcal R}^{(m), y}_{\epsilon_\mu}$, which is the $\hat{\mathcal R}^{(m),y}$ of \cref{thm:da} pushed out by $\epsilon_\mu$, overapproximates an $\epsilon$-accurate reachable set of the \emph{true expected output} with confidence $\delta$, \ie~$\mathbb P(\mathcal R^{\mathbb EY_1}_{[t_0, t_1], \epsilon} \subset \hat{\mathcal R}^{(m), y}_{\epsilon_\mu}) \geq 1-\delta$. 
\end{mycorollary}
\begin{proof}
    We extend the proof of \cref{thm:da} to the expectation of $y_t$. The key addition to the proof of \cref{thm:da} is uncertainty in $\mathbb E y_t$ caused by sampling, for which we use a vector variant of Hoeffding's inequality from \citet{pinelis_optimum_1994}. The inequality states that, for $N$ samples of a random variable $Y \in \mathbb R^d$ that has bounded deviations from the mean $\|\overline Y - \mathbb E Y\|_2 \leq R$, and a choice of $\epsilon_\mu > 0$, then  
    \begin{equation}
    \label{eq:hoeffding}
        \mathbb P(\|\overline Y - \mathbb E Y\|_2 \geq \epsilon_\mu) \leq \delta_\mu,
    \end{equation}
    where $\delta_\mu = 2 \exp \frac{-\epsilon_\mu^2 N}{2R^2} > 0$. That is, for enough samples $N \geq \frac{2R^2}{\epsilon_\mu^2}\log \frac{2}{\delta_\mu}$, $0 < \delta_\mu < 1$, the empirical mean $\overline Y$ will approximate the true mean $\mathbb E Y$ with high probability. 

    Hoeffding's inequality enters when choosing the halfspaces $\mathcal H_i$ in \cref{thm:da}. Recall that, as $y_t \in \mathbb R^n$, there are $2n$ ``faces" of the interval covering the reachable set. In \cref{thm:da}, each $\mathcal H_i$, $i=1\dots 2n$ is defined as the halfspace facing ``away" from the samples in the interval. We modify this definition here. Let $\{\overline y_j\}_{j=1}^m$, where $\overline{y}_j = \frac{1}{N}\sum_{k=1}^N y_{j,k}$, be sampled approximations of the true means $\{\mathbb Ey_j\}_{j=1}^m$. For a choice of $\epsilon_\mu$ and confidence $\delta_\mu$, let $N$ be such that \cref{eq:hoeffding} holds for all $y_j$; that is, all sampled means $\overline y_j$ are within $\epsilon_\mu$ of the true mean $\mu_{j}^y$, with confidence $\delta_\mu$. Now, define the halfspaces $\mathcal H_i$ as the faces of the interval covering all sampled means $\overline y_j$, then \emph{shifted each by $\epsilon_\mu$ away from the samples}. 
    
    The probability that \emph{all true means} are contained within the bounding box defined by the $\mathcal H_i$ is at least the probability that all true means lie within $\epsilon_\mu$ of their corresponding sample. Because the $m$ samples are independent, this probability is given by 
    \begin{equation}
    \label{eq:boxes_compliant}
        \mathbb P(\cap_{j=1}^m (\| \overline y_j - \mu_j^y \|_2 < \epsilon_\mu)) \geq (1- \delta_\mu)^m.
    \end{equation}
    
     We now proceed following the proof of \cref{thm:da}, defining the halfspace $\mathcal P_i$ with respect to the \emph{true means} $\mu^y_j$, $j=1\cdots m$. Let $\hat{\mathcal R}^{(m), y}_{\epsilon_\mu}$ be the smallest bounding interval of the sampled means, pushed out by $\epsilon_\mu$. If a $\mu_j^y \in \mathcal P_i$, then $\mathcal H_i \subset \mathcal P_i$, or $p_t(\mathcal H_i) \leq p_t(\mathcal P_i) = \frac{\epsilon}{2n}$, with probability greater than  $(1-\delta_\mu)^m$. Then, w.p.~$(1-\delta_\mu)^m$, $p_t(\cup_i \mathcal H_i) \leq \epsilon \iff p_t(\hat{\mathcal R}^{(m), y}_{\epsilon_\mu}) \geq 1-\epsilon \implies$ $\hat{\mathcal R}^{(m), y}_{\epsilon_\mu}$ is an $\epsilon$-accurate reachable set. Overall, let the event $A$ be that $\hat{\mathcal R}^{(m), y}_{\epsilon_\mu}$ contains an $\epsilon$-accurate reachable set. Our goal is to bound $\mathbb P(A)$. We do so by conditioning $A$ on an event $B$: ``$\mathcal H_i$ bound the true means". Then, $\mathbb P(A) \geq \mathbb P(B) \mathbb P(A|B)$. The event $A|B$ reduces to the setting in DA19: ``the $\mathcal P_i$ lie outside the $\mathcal H_i$". 
     
     $\mathbb P(B)$ is at least $(1-\delta_\mu)^m$. Now, given $B$, that is, given the $\mathcal H_i$ bound the true means,
    \begin{equation}
        \mathbb P(A|B) \geq 1 - 2n(1-\frac{\epsilon}{2n})^m, \qquad \text{(Theorem 1, DA19)}
    \end{equation}

    Then, the overall confidence is given by
    \begin{equation}
    \label{eqn:bound}
        \mathbb P(p_t(\hat{\mathcal R}^{(m), y}_{\epsilon_\mu}) \geq 1 - \epsilon) = \mathbb P(B) \mathbb P(A|B) \geq \underbrace{(1-\delta_\mu)^m(1-2n(1-\frac{\epsilon}{2n})^m)}_{\text{target probability }1-\delta}.
    \end{equation}
    If the number of samples $m$ satisfies
    \begin{equation}
         \underbrace{(1-\delta_\mu)^m}_{\li{1} \text{ Equation \cref{eq:boxes_compliant}}}\underbrace{(1-2n(1-\frac{\epsilon}{2n})^m)}_{\li{2} \text{ Theorem 1, DA19}} \geq 1-\delta,
    \end{equation}
    then $\hat{\mathcal R}^{(m), y}_{\epsilon_\mu}$ contains the $\epsilon$-accurate reachable set with confidence level $\delta$. 
\end{proof}

Note opposing dependencies of factors \li{1} and \li{2} on $m$. The probability $\mathcal P_i$ do not contain samples \emph{increases} as $m$ increases, while the probability that $\mathcal H_i$ do not contain samples \emph{decreases} to $0$ as $m$.

\FloatBarrier
\section{Controllability: \Cref{mythm:controllability}}    
\label{app:controllability}
\subsection{Output controllable set}
\begin{definition}[Output controllable set]
\label{def:controllable_set}
    Given a control system $(\phi, \mathcal X, \mathcal U, \mathcal T, y)$, the output controllable set $\mathcal C_t \subseteq \mathcal Y$ at time $t$ is given by
    \begin{equation}
        \mathcal C_t = \bigcap_{x_0 \in \mathcal X_0} \mathcal R_t(x_0, \mathcal U),
    \end{equation}
    or the intersection of the reachable sets of all initial states $x_0 \in \mathcal X_0$.
\end{definition}

\subsection{Proof of \Cref{mythm:controllability}}    
We state the proof of \cref{mythm:controllability} for the quantized case. The proof of the categorical case immediately follows by removing $\gamma$ everywhere it is written.
\begin{proof}
    Let $E$ be the event that $\mu(\hat{\mathcal C}_t \setminus \mathcal C_t^{\alpha}) < \epsilon$. We want to $\mathbb P(E)$ to be greater than some $1-\delta$. Now, let $F$ be the event that $\mathcal R_{t,p}^{\gamma}(x_0^{(i)}) \subset \hat {\mathcal R}_t(x_0^{(i)})$ for all $i=1\dots k$. That is, $F$ is the event that all sampled reachable sets contain the true target. We have $\mathbb P(E) \geq \mathbb P(E|F)\mathbb P(F)$. Our strategy will be to lower bound the RHS by lower-bounding each term on the RHS individually.

    We start with $\mathbb P(F)$. By independence of the $k$ sampled reachable sets, we have $\mathbb P(F) \geq (1-\delta_R)^k$, by \cref{mythm:quantized}.
    
    We now consider the term $\mathbb P(E|F)$. Abbreviate $\hat{\mathcal C}_k = \cap_{i=1}^k \hat{\mathcal R}_t(x_0^{(i)})$ to be the empirical intersection of $k$ sampled sets, where each $x_0^{(i)}\sim p_0$, $i=1\dots k$. We aim to bound the error $\mu(\hat{\mathcal C}_k \setminus \mathcal C_t^{\alpha})$. For brevity, we rewrite $B = \hat{\mathcal C}_k \setminus \mathcal C_t^{\alpha}$ to mean all points outside the true $\mathcal C_t^{\alpha}$, but inside the empirical $\mathcal C_k$. All points $y_{\text{bin}}$ in the true ($\alpha$)-controllable set $\mathcal C_t^{\alpha}$ have $\mu(y_{\text{bin}}) > 1-\alpha$. All points in $B$, are \emph{bins} that have \emph{incorrectly survived $k$ samples}. In other words, each element $y_{\text{bin}}$ of $B$ is such that $\mu(y_{\text{bin}}) < 1-\alpha$. Our aim is to find a number of sampled sets $k$ after which 
    \begin{equation}
    \label{eq:bound}
        \mathbb P(\mu(B) < \epsilon \mid F) \geq 1-\delta_C.
    \end{equation} This is achieved with Markov's inequality:
    \begin{align}
        \mathbb P(\mu(B) > \epsilon \mid F) &\leq \frac{\mathbb E[\mu(B) \mid F]}{\epsilon}.
    \end{align}

    Then, if $\mathbb E[\mu(B) \mid F] / \epsilon \leq \delta_C$, then \cref{eq:bound} holds. Note that $\mathbb E[\mu(B)\mid F]$ is equal to the probability a $y_{\text{bin}}$ with $\mu(y_{\text{bin}})<1-\alpha$ is hit by $k$ samples, given that each sampled reachable set is accurate. This likelihood is at most 
    \begin{align}
        \mathbb E[\mu(B)\mid F] &= \mathbb P(\text{$y_{\text{bin}}: \mu(y_{\text{bin}})<1-\alpha$ hit by $k$ samples} \mid \text{all reachable set samples accurate}) \\
        & \leq (1-\alpha)^k.
    \end{align}
    Then, if
    \begin{align}
        \frac{(1-\alpha)^k}{\epsilon} &\leq \delta_C \\
        \implies k &\geq \frac{\log \epsilon \delta_C}{\log (1-\alpha)},
    \end{align}
    then $\mathbb P(\mu(B) < \epsilon \mid F) \geq 1-\delta_C$, as desired.

    Putting the two terms together, we have that if $k \geq \frac{\log \epsilon \delta_C}{\log (1-\alpha)}$, then with probability at least $(1-\delta_C)(1-\delta_R)^k$, $\mu(\hat{\mathcal C}_t \setminus \mathcal C_t^{\alpha}) \leq \epsilon$. 
    \end{proof}

\subsection{Auto-parameter selection for \Cref{mythm:controllability}}    
\label{app:hyperparameters}

There are several free parameters in \cref{mythm:abstract,mythm:controllability}. If the end goal is a \emph{controllability} analysis, then given a target confidence $1-\delta \leq (1-\delta_R)^k(1-\delta_C)$, we can compute the optimal values for $\delta_R$ and $\delta_C$ such that the total sample complexity $n=m\cdot k$ is minimized. The intuition is that $\delta_R$ and $\delta_C$ are a tradeoff between more reachable set samples ($m$) and more initial state samples ($k$).

Formally, this can be written as the following constrained optimization problem:
\begin{align}
    \text{minimize} \qquad & n = m \cdot k && \text{(Total number of samples)} \\
    \text{subject to} \qquad 
        & m \geq \frac{\log(\delta_R / N)}{\log(1 - p)} && \text{(Reachability, \cref{mythm:abstract})} \\
        & k \geq \frac{\log(\epsilon \cdot \delta_C)}{\log(1 - \alpha)} && \text{(Controllability, \cref{mythm:controllability})} \\
        & (1 - \delta_R)^k (1 - \delta_C) \geq 1 - \delta && \text{(Overall confidence constraint)} \label{eq:confidence_overall}\\
    \text{with fixed parameters} \qquad 
        & \delta,\; p,\; \alpha,\; \epsilon \in (0,1) \\
        & N \in \mathbb{N}_{\geq 2} \\
    \text{and optimization variables} \qquad 
        & m,\; k \in \mathbb{N}_{\geq 2},\quad \delta_C,\; \delta_R \in (0,1).
\end{align}

If we only require an overall confidence $\delta$ on the controlllable set (RHS of Theorem \ref{mythm:controllability}), we can abstract away the decision of $\delta_R$ and $\delta_C$. By doing a dense grid search over $(\delta_C, \delta_R)$ on \texttt{np.geomspace(1e-6, 0.999, 250)}$^2$, we can find the minimum total number of samples needed to satisfy all the constraints in \cref{mythm:abstract,mythm:controllability,eq:confidence_overall}. The extra time taken by this step is negligible compared to the time it takes to sample the dialogue process. 

\subsection{Sample complexity of \Cref{mythm:controllability}}    
We split the analysis into two parts: \li{1} dependence of $k$ on hyperparameters, see \cref{fig:initial_state_samples}, then \li{2} dependence of the total sample complexity $n=m\cdot k$ on hyperparameters, using the subroutine described in \cref{app:hyperparameters}.

\begin{figure}[H]
    \centering
    \includegraphics[width=\linewidth]{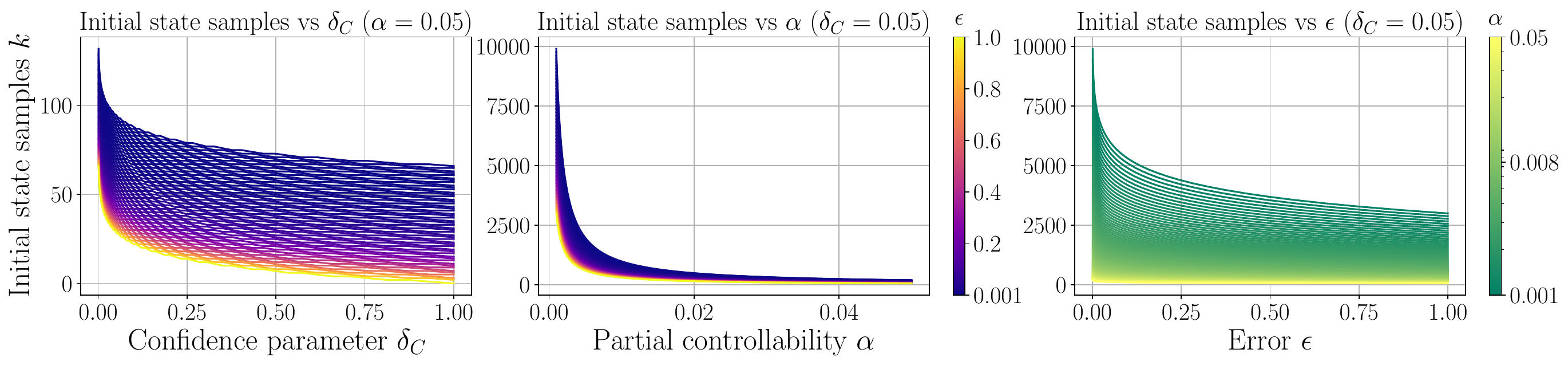}
    \caption{\textbf{Number of initial state samples dependence on hyperparameters.} From left to right, we see the number of initial state samples $k$ grow with respect to $\delta_C$ (constant $\alpha$), $\alpha$ (constant $\delta_C$), and $\epsilon$ (constant $\delta_C$). Overall, the most drastic change in sample complexity comes from decreasing $\alpha$, the partial controllability parameter. Sample complexity also increases sharply as $\delta_C$ approaches $0$. In contrast, sample complexity is robust to increasing $\epsilon$, for $\epsilon$ closer to $1$, especially for more lenient partial controllability (higher $\alpha$).}
    \label{fig:initial_state_samples}
\end{figure}

\Cref{fig:initial_state_samples} shows that the number of sampled initial states highly depends on $\alpha$ (middle panel), while comparatively robust to the confidence parameter $\delta_C$ and error $\epsilon$. What is more important is the \emph{total number of samples} $n$. We performed the procedure in \cref{app:hyperparameters} to abstract away the choice of $\delta_C$ and $\delta_R$, and analyzed the dependence of total sample complexity $n$ on $p$, $N$, $\epsilon$, and $\alpha$. \Cref{fig:total_sample_complexity} shows how $n$ varies with each hyperparameter. Sample complexities hover near $\lesssim 10^7$ for very small ($\approx 0.001$) values of $\alpha$ and $p$.

\begin{figure}[H]
    \centering
    \includegraphics[width=\linewidth]{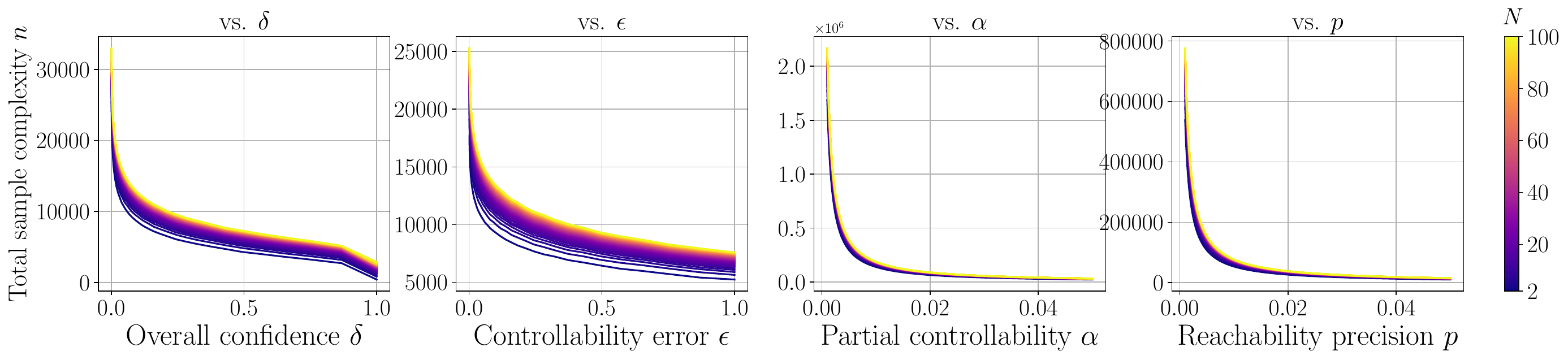}
    \caption{\textbf{Total sample complexity as a function of hyperparameters.} We vary the total number of samples $n$ in controllable set estimation as a function of each fixed hyperparameter $(\delta, \epsilon, \alpha, p, N)$, one at a time. When varying each parameter (each panel), we fix all others except for $N$, which varies, at a constant value of $0.05$. Given overall confidence $\delta$, we auto-select $\delta_C$ and $\delta_R$ using the algorithm in \cref{app:hyperparameters}.}
    \label{fig:total_sample_complexity}
\end{figure}

\newpage 
\FloatBarrier
\section{Hypothesis testing}
\label{app:hypothesis_testing}

Because \cref{mythm:abstract,mythm:controllability} estimate confidence sets, they lend themselves to hypothesis testing. We give an intuition in the below example for the reachable set:

\begin{myexample}{Hypothesis testing, reachable set}{statistical_test}
    \begin{enumerate}
        \item \textbf{Q: }I have a toxicity verifier. Is it possible to control my LLM's toxicity to $[0, 0.4]$ with prompting? I am fine with an error of $\gamma=0.05$, and I consider a region to have nontrivial probability if its probability is greater than $p=0.001$.
        \item \textbf{Quantize.} Estimate $N = 0.4 / 0.05 = 8$ bins.
        \item \textbf{Null hypothesis}: My target range $[0, 0.4]$ is $0.05$-quantized reachable.
        \item Set the $p$-value cutoff to, \eg $\delta=0.05$.
        \item Sample $m\geq 2204$ toxicity scores (applying \cref{mythm:abstract}).
        \item Suppose we obtain $\hat{\mathcal{R}}_t^{(m)} = [0, 0.07] \cup [0.5,0.8] \not \supset \mathcal Y^*$. 
        \item Reject the null hypothesis, concluding that $\mathcal Y^*$ is \emph{not} reachable within $\gamma$ with a $p$-value cutoff $\delta$.
    \end{enumerate}
\end{myexample}

\FloatBarrier
\section{Extended experimental setup}
\label{app:experiment_details}

\begin{table}[htb!]
    \caption{\textbf{Hyperparameters for all controllability experiments.}}\vspace{1ex}
    \centering
    \begin{tabular}{c|cc}
    Variable & Description & Value \\
    \hline 
    $\alpha$ & partial controllability, \cref{mythm:controllability} & $0.1$ \\
    $\epsilon$ & controllable set error, \cref{mythm:controllability} & $0.05$ \\
    $\delta$ & controllable set confidence, \cref{mythm:controllability} & $0.05$ \\
    $p$ & reachability precision, \cref{mythm:abstract} & $0.05$ \\
    $\gamma$ & quantization, \cref{mythm:abstract} & $0.1$ \\
    \end{tabular}
    \label{tab:hyperparameters}
\end{table}

\begin{table}[h]
\caption{\textbf{Model sampling hyperparameters.} For the stochastic setting, \ie where we sample the underlying LLM, as well as for initial state generation $p_0$ with Mistral, we use the following sampling parameters.}
    \centering
    \begin{tabular}{c|c}
        \texttt{max\_new\_tokens} & 100 \\
        \texttt{temperature} (stochastic $\phi$) & 0.7 \\
        \texttt{temperature} ($p_0$) & 0.9 \\ 
        \texttt{top\_p} (stochastic $\phi$) & 0.9 \\
        \texttt{top\_p} ($p_0$) & 0.95 \\
        \texttt{top\_k} & 50
    \end{tabular}
    \label{tab:sampling_parameters}
\end{table}

\FloatBarrier
\subsection{Input distributions}
\label{app:distributions}

We here describe the input distributions related to each task in the experimental results.

\textbf{LLM task templates}

\begin{pytemplate}[Generate an even/odd number | Template 0-shot]
"Generate a strictly positive {} integer."
\end{pytemplate}

\begin{pytemplate}[Generate an even/odd number | Template 5-shot]
"""
You are a chatbot that follows instructions. I will give you some examples:
        Instruction: Generate a strictly positive odd integer. Only generate the integer and nothing else.
        Result:
        31

        Instruction: Generate a strictly positive even integer. Only generate the integer and nothing else.
        Result:
        94726

        Instruction: Generate a strictly positive odd integer. Only generate the integer and nothing else.
        Result:
        7

        Instruction: Generate a strictly positive even integer. Only generate the integer and nothing else.
        Result:
        200

        Instruction: Generate a strictly positive odd integer. Only generate the integer and nothing else.
        Result:
        2399

        Knowing this, please follow the next instruction.

        Instruction: Generate a strictly positive {} integer. Only generate the integer and nothing else.
        """
\end{pytemplate}

\begin{pytemplate}[Generate an even/odd number | Feedback template]
"Your answer was incorrect. I asked for a {desired_value} number, and you gave an {last_str}. Please try again. Strictly write the integer without quotes and do not write anything else."
\end{pytemplate}

\begin{pytemplate}[{Generate number of characters | Template 0-shot}]
"Write a string of {} characters. Strictly write the string without quotes and do not write anything else."
\end{pytemplate}

\begin{pytemplate}[{Generate number of characters | Template 0-shot}]
"""
You are a chatbot that follows instructions. I will give you some examples:

        Instruction: Write a string of 2 characters. 
        Result:
        Hi

        Instruction: Write a string of 5 characters.
        Result:
        Hello

        Instruction: Write a string of 7 characters. 
        Result:
        abcdefg

        Instruction: Write a string of 4 characters.
        Result:
        Cars

        Instruction: Write a string of 10 characters.
        classifier 

    Knowing this, please follow the next instruction.

    Instruction: Write a string of {} characters. Strictly write the string without quotes and do not write anything else.
    """
\end{pytemplate}

\begin{pytemplate}[{Generate number of characters | Feedback template}]
"Your answer was too {custom_str}. I asked for a string consisting of {desired_value} characters, and you produced one with {last_outputs} characters. Please try again. Strictly write the string without quotes and do not write anything else."
\end{pytemplate}

\begin{pytemplate}[{Generate formality | Template 0-shot}]
"Generate a story that is {:.2f}\% formal. Write the story and only the story."
\end{pytemplate}

\begin{pytemplate}[{Generate formality | Template 5-shot}]
"""
You are a chatbot that follows instructions. I will give you an example:
    Instruction: Generate a story that is 100
    Result:
    Kind sir, please make your way to the terrace.  

    Instruction: Generate a story that is 75
    Result:
    She bought the book, then she went to the beach.

    Instruction: Generate a story that is 50
    Result:
    Hey! Just went to the store.

    Instruction: Generate a story that is 25
    Result:
    Wow I'm so tired.

    Instruction: Generate a story that is 0
    Result:
    hey man how's it goin

    Knowing this, please follow the next instruction.

    Generate a story that is {}\% formal. Write the story and only the story.
    """
\end{pytemplate}

\begin{pytemplate}[{Generate formality | Feedback template}]
"Your answer was too {formal_str}. I asked for a formality level of {desired_value:.2f}, and you gave a sentence with formality {100 * last_outputs:.2f}. Please try again. Write the story and only the story."
\end{pytemplate}

\begin{pytemplate}[{Generate word length | Template 0-shot}]
"Generate a sentence where the average word length is exactly {} letters. Strictly write the sentence without quotes and do not write anything else."
\end{pytemplate}

\begin{pytemplate}[{Generate word length | Template 5-shot}]
 """
 You are a chatbot that follows instructions. I will give you an example:

    Instruction: Generate a sentence where the average word length is exactly 4 letters. 
    Result:
    The pig jumped above her.

    Instruction: Generate a sentence where the average word length is exactly 3.7 letters.
    Result:
    She bought the book, then she went to the beach.

    Instruction: Generate a sentence where the average word length is exactly 3.5 letters.
    Result:
    Hey! Just went to the store.

    Instruction: Generate a sentence where the average word length is exactly 3 letters.
    Result:
    Wow I'm so tired.

    Instruction: Generate a sentence where the average word length is exactly 10 letters.
    Result:
    Stop redistribution!

    Knowing this, please follow the next instruction.

    Generate a sentence where the average word length is exactly {} letters. Strictly write the sentence without quotes and do not write anything else.
    """
\end{pytemplate}

\begin{pytemplate}[{Generate word length | Feedback template}]
"Your answer was too {custom_str}. I asked for a string whose average word length is {desired_value:.2f} letters, and you produced one with average word length {last_outputs:.2f}. Please try again. Strictly write the sentence without quotes and do not write anything else."
\end{pytemplate}

\textbf{Text-to-Image task templates}

\begin{pytemplate}[{Generate N objects | Template 0-shot}]
"White background. {} {object}."
# object choices: COCO classes
\end{pytemplate}

\begin{pytemplate}[{Generate object at position | Template 0-shot}]
"White background. A {object} at the {pos} of the image."
# object choices: COCO classes
# pos choices: ["top left", "top right",  "bottom left", "bottom right", "center"]
\end{pytemplate}

\begin{pytemplate}[{Generate saturation | Template 0-shot}]
"{:.2f}\% saturation."
\end{pytemplate}

\FloatBarrier
\section{Controllability package}
\label{app:package}

We open-source a Python package to test controllability and estimate reachable sets. The package is completely built on PyTorch and supports loading of local or HuggingFace models to realize initial state or input distributions, as well as the readout map. An example usage is given below:

\begin{pytemplate}[{Basic usage of our toolkit}]
import torch
from controllability.systems.control_system import ControlSystem
from controllability.verifiers.reachability import Reachability
from controllability.verifiers.controllability import Controllability

# Define initial states, input distributions
initial_states_distribution = ...
input_distribution = ...

# Define control system
dialogue_process = ControlSystem.from_model_name(
    "google/gemma-3-4b-it",
    output_map="huggingface/text-classifier",
    output_space=[[0, 1]],
    input_distribution=input_distribution,
    # model_config= {do_sample=True, ...}
)

# Reachability experiment
time_horizon = 5
reachability_verifier = Reachability.from_problem_type(
    "quantized",
    dialogue_process,
    input_distribution,
    p=0.05,
    delta=0.05
)

reachable_set = reachability_verifier.get_reachable_set(
    initial_state="Hello! ",
    time_horizon
)

# Controllability experiment
controllability_verifier = Controllability.from_reachability_problem(
    reachability_verifier,
    epsilon=0.05,
    alpha=0.1,
    delta=0.05
)
controllable_set = controllability_verifier.get_controllable_set(
    initial_states_distribution,
    time_horizon
)
\end{pytemplate}

\FloatBarrier
\section{Use of AI Writing Assistance}
We acknowledge the use of a large language model (LLM) to assist in refining the language, grammar, and clarity of this manuscript. All content was originally drafted by the human authors, and all AI-generated suggestions were critically reviewed, edited, and approved by the authors, who retain full responsibility for the final text.

\applefootnote{ \textcolor{textgray}{\sffamily Apple and the Apple logo are trademarks of Apple Inc., registered in the U.S. and other countries and regions.}}

\end{document}